\documentclass[10pt]{article} % For LaTeX2e
\PassOptionsToPackage{table}{xcolor}
\usepackage[preprint]{tmlr}

\usepackage{blindtext}
\usepackage{amsmath}
% If accepted, instead use the following line for the camera-ready submission:
%\usepackage[accepted]{tmlr}
% To de-anonymize and remove mentions to TMLR (for example for posting to preprint servers), instead use the following:
%\usepackage[preprint]{tmlr}

% Optional math commands from https://github.com/goodfeli/dlbook_notation.
%\input{math_commands.tex}

\usepackage{hyperref}
\usepackage{url}
\usepackage{amsfonts}
\usepackage{booktabs}
\usepackage{tabularx}
\usepackage{algpseudocode}
\usepackage{amsthm}
\usepackage[table]{xcolor}
\usepackage{graphicx}
\usepackage{caption}
\usepackage{subcaption}
\usepackage{amssymb}
\usepackage{hyperref}
\usepackage{fancyhdr}
\usepackage{natbib}
\usepackage{multicol}
\usepackage{multirow}
\usepackage{algorithm}
\usepackage{algpseudocode}
\usepackage{amsthm}

\usepackage{booktabs} % for table formatting
\usepackage{multirow} % for multirow tables (if needed)

% Definitions of handy macros can go here

\newtheorem{Theorem}{Theorem}[section]

\newtheorem{Remark}{Remark}[section]

\newcommand{\thistheoremname}{}
\newtheorem*{genericthm*}{\thistheoremname}
\newenvironment{namedthm}[1]
  {\renewcommand{\thistheoremname}{#1}%
   \begin{genericthm*}}
  {\end{genericthm*}}

\newcommand{\dataset}{{\cal D}}
\newcommand{\subdataset}{{\cal X}}

\newcommand{\Rep}{\operatorname{Rep}\nolimits}
\newcommand{\ev}{\operatorname{ev}\nolimits}
\newcommand{\Mat}{\operatorname{Mat}\nolimits}
\newcommand{\M}{\operatorname{M}\nolimits}

%\usepackage{setspace}  % Required package for \setstretch
%\setstretch{0.8}
% Heading arguments are {volume}{year}{pages}{date submitted}{date published}{paper id}{author-full-names}
\usepackage{lastpage}

\title{Hidden Activations Are Not Enough: A General Approach to Neural Network Predictions}
% Matrix Statistics for Neural Networks: From Quiver Representation to Adversarial Examples Detection
% Matrix Statistics for Neural Networks: From Quiver Representation to the Detection of Adversarial Examples

% Authors must not appear in the submitted version. They should be hidden
% as long as the tmlr package is used without the [accepted] or [preprint] options.
% Non-anonymous submissions will be rejected without review.

\author{\name Samuel Leblanc \email samuel.leblanc6@usherbrooke.ca \\
       \addr D\'epartement de math\'ematiques \\
       Universit\'e de Sherbrooke\\
       Sherbrooke, QC J1K 2R1, Canada
        \AND
    \name Aiky Rasolomanana \email aiky.rasolomanana@umontreal.ca \\
       \addr Département d'informatique et de recherche opérationnelle  \\
       Universit\'e de Montr\'eal\\
       Montreal, QC H3T 1J4, Canada
       \AND
       \name Marco Armenta \email marco.armenta@usherbrooke.ca \\
       \addr Institut quantique\\
       Universit\'e de Sherbrooke\\
       Sherbrooke, QC J1K 2R1, Canada}

% The \author macro works with any number of authors. Use \AND 
% to separate the names and addresses of multiple authors.

  % Insert correct month for camera-ready version
 % Insert correct year for camera-ready version
 % Insert correct link to OpenReview for camera-ready version

\begin{document}

\maketitle

\begin{abstract}
We introduce a novel mathematical framework for analyzing neural networks using tools from quiver representation theory. This framework enables us to quantify the similarity between a new data sample and the training data, as perceived by the neural network. By leveraging the induced quiver representation of a data sample, we capture more information than traditional hidden layer outputs. This quiver representation abstracts away the complexity of the computations of the forward pass into a single matrix, allowing us to employ simple geometric and statistical arguments in a matrix space to study neural network predictions. Our mathematical results are architecture-agnostic and task-agnostic, making them broadly applicable. As proof of concept experiments, we apply our results for the MNIST and FashionMNIST datasets on the problem of detecting adversarial examples on different MLP architectures and several adversarial attack methods. Our experiments can be reproduced with our \href{https://github.com/MarcoArmenta/Hidden-Activations-are-not-Enough}{publicly available repository}
\end{abstract}

\section{Introduction}

As neural networks increasingly dominate the machine learning landscape, a profound understanding of their behaviour and decision-making processes remains an elusive goal. Recent advances in quiver representation theory have provided a novel framework for analyzing neural networks, revealing a rich algebraic and geometric structure underlying these models. This paper builds upon the mathematical foundations established by \cite{armenta1} and \cite{armenta2}, which provide a profound mathematical bridge between neural networks and quiver representations.

We propose a theoretical framework for analyzing neural networks through the application of quiver representation theory. It has been shown by \cite{armenta1} that a neural network produces a quiver representation for each input sample. Then, this quiver representation produces a matrix, as shown by \cite{armenta2}. We show, by providing rigorous mathematical results, that these matrices can be used to study neural network data representations from a completely different perspective than what has been done in the literature, with vastly greater generality and algebraic consistency than only looking at hidden neuron activations. 

For instance, we show that these matrices are invariant under isomorphisms of neural networks as defined by \cite{armenta1}, where an \textit{isomorphism of neural networks} is defined as a transformation that is algebraically consistent with the neural network structure. It was also proved by \cite{armenta1} that isomorphisms of neural networks preserve the network function, that is, isomorphic neural networks have exactly the same network function. They also proved that, although isomorphic neural networks have the same network function, their hidden activations can be completely different, rendering any analysis of the network in terms of the hidden activations algebraically inconsistent. This is the phenomenon exploited by the work of \cite{Dinh17} to disprove the flat minima argument for the generalization capabilities of neural networks, which highlights the importance of taking into account this type of invariance when making theoretical claims about neural network behaviour.

As a proof of concept for our theoretical results, we present a simple algorithm for detecting adversarial examples using geometric and statistical arguments in matrix space. Roughly speaking, the matrix representations generated by the neural network produce a hyper-ellipsoid per class (in classification or tokenized tasks) within this matrix space. We then use these hyper-ellipsoids to give a figure of merit for how close a new data sample is to the training distribution. Consequently, our detection method is architecture-agnostic and attack-method-agnostic, yielding a generalizable approach. Notably, this method proves to be more effective against certain attack methods, depending on the architecture. 

While our adversarial example detection method based on this framework demonstrates promising results in identifying adversarial examples, its primary purpose is to illustrate the broader potential and applicability of the matrix statistics approach to neural networks. Through this work, we aim to encourage further exploration of the intersection between quiver representation theory, geometry and neural networks, ultimately advancing our understanding of these complex models and their role in machine learning. 

In summary, our work makes the following key contributions:
\begin{itemize}
    \item[1] We prove that the induced matrices of neural networks are invariant under isomorphisms of neural networks.
    \item[2] We prove that the distance between matrices in the infinity norm is greater or equal to the distance between the logits of the network in the maximum norm, independently of architecture, training or task.
    \item[3] We prove that the distance between matrices, considered as flattened vectors, in the $1$-norm is greater or equal to the distance between the logits of the network in any $p$-norm, independently of architecture, training or task.
    \item[4] We provide a proof-of-concept adversarial example detection method based on these matrices, that only requires simple statistical and geometric arguments in matrix space.
    \item[5] We provide several experimental results of our detection method on different MLP architectures on the MNIST and FashionMNIST datasets with several adversarial attack methods, and we leverage our results transparent and reproducible via our \href{https://github.com/MarcoArmenta/Hidden-Activations-are-not-Enough}{publicly available repository}.
\end{itemize}

\section{Previous Work}

Since their appearance, neural networks have been studied in a per-layer fashion, see for instance the works by \cite{hinton, goodfellow, geometric, circuits}. However, a new perspective was introduced by \cite{armenta1}, where the neural network structure is broken down into pieces, consisting of fixed weights and activation functions that change during training but remain fixed during forward passes. This perspective allows the neural network to construct a quiver representation that contains all of the computations of the forward pass on a single input sample. This formalism is general and applies to any neural network, regardless of architecture, activation function, data, task, or learning algorithm. All of this with no approximation arguments, as done, for instance, in the spline theory of \cite{Balestriero18}, whose main results are restricted to piece-wise affine and convex activation functions and play a fundamental role in their proofs.

It is also shown in Section 6.1 of \cite{armenta1} that neuron activations at a specific layer can be recovered from the induced quiver representation by composing with a forgetful functor, that is, by formally forgetting information in the language of category theory. These quiver representations are mapped to a moduli space and it was then proved by \cite{armenta2} that this moduli space can be embedded into the matrix space $\Mat_\mathbb{R}(k,d)$, where $d$ is the input space dimension and $k$ is the number of output neurons. As a consequence, a neural network maps every single input sample to a quiver representation and then to a matrix without losing any information, contrary to the hidden neuron outputs or latent spaces. 

Furthermore, we prove that these matrices are invariant under isomorphisms of neural networks as defined by \cite{armenta1}. This is important because, for example, the positive scale invariance of ReLU is a particular case of isomorphism of neural networks, that was used by \cite{Dinh17} to disprove flat minima arguments on generalization capabilities of neural networks made by \cite{Sepp97} and \cite{Keskar17}, among others. Isomorphisms of neural networks, however, apply to any architecture independently of the activation function. The work of \cite{Dinh17} suggests that understanding neural networks necessitates arguments that are, at the very least, invariant under positive scale invariance. Our results show that the induced matrices satisfy this property and are therefore suitable for investigating fundamental questions in the science of deep learning. This mathematical formalism and generality are precisely what is lacking in current machine learning research, and we aim to address this gap.

Neural networks are vulnerable to adversarial examples and out-of-distribution data, prompting the development of various detection methods. Robust training, model ensembles, and regular updates with performance monitoring are common approaches to enhance robustness, see for instance the work by \cite{goodfellow2015explaining, Ali2019, deng24, zhang21}. The work of \cite{hendrycks2017baseline} proposes several detection methods, including analyzing variance in PCA-whitened inputs and softmax distribution differences. Their approach involves examining the variance of coefficients in PCA-whitened inputs, leveraging the observation that adversarial images tend to have larger variance in later principal components compared to clean images. However, \cite{carlini2017towards} found that this method can be bypassed if attackers are aware of it, as they can constrain the variance of later principal components during adversarial example generation. \cite{hendrycks2017baseline} also suggested analyzing input reconstructions obtained by adding an auxiliary decoder to the classifier model. \cite{metzen2017detecting} introduced a binary detector network to differentiate between real and adversarial examples. However, our results suggest that methods based on network outputs, such as logits or softmax distributions, may not be as effective as those utilizing the matrices produced by the networks. 

Recent research by \cite{jailbreak} has revealed a disturbing vulnerability in multimodal large language models (LLMs), where adversarial examples can be crafted to ``jailbreak'' these systems, circumventing their defences and raising alarming questions about the reliability, interpretability, and transparency of neural networks. This exposes a critical need for a rigorous, foundation-first approach to neural network design and one that prioritizes transparency, robustness, and mathematical rigour over black-box efficiency. Our work addresses this need, providing a foundation-first mathematical approach that sets a completely different view for studying and understanding neural networks. % as suggested, for example, by the works of \cite{prince23} and \cite{greydanus24}.

\section{Mathematical Background: Notation and Basic Results}

For the reader's convenience, we gather all essential definitions and notation in this section, rendering the paper self-contained. 

A \emph{quiver} $Q = (Q_{0}, Q_{1}, s, t)$ is a quadruple consisting of an oriented graph $(Q_{0},Q_{1})$, with set of vertices $Q_0$ and set of oriented edges, or arrows, $Q_1$, and where $s,t:Q_{1} \to Q_{0}$ are maps that associate to each arrow $\alpha \in Q_{1}$ its \emph{source} $s(\alpha)$ and its \emph{tail} $t(\alpha)$. 
A \emph{representation} of $Q$ over a field $K$ is a couple $W = (W_{q},W_{\alpha})_{q\in Q_{0},\alpha\in Q_{1}}$ where $W_{q}$ is a $K$-vector space and $M_{\alpha} : M_{s(\alpha)} \to M_{t(\alpha)}$ is a $K$-linear map. 
In this paper, we work with $K = \mathbb{R}$, the field of real numbers. 
Let $Q$ be a quiver and let $W$ and $V$ be two representations of $Q$. 
A collection of linear maps $\tau = (\tau_q)_{q \in Q_{0}} : W \to V$ is a \emph{morphism} of representations if $\tau_{t(\alpha)}W_{\alpha} = V_{\alpha}\tau_{s(\alpha)}$ for all $\alpha \in Q_{1}$. 
We say that $\tau$ is an \emph{isomorphism} if, for every $q \in Q_{0}$, $\tau_{q}$ is invertible. 
The representations $W$ and $V$ are \emph{isomorphic} if there exists an isomorphism $\tau : W \to V$ and we write $W \cong V$. 

%Coming back to neural networks, let $Q$ be the underlying quiver (graph) of a neural network with $d$ input neurons (source vertices) and $k$ output neurons (sink vertices). This means that the data is composed of $d$-dimensional inputs, which are mapped to a $k$-dimensional output space, which can refer, for example, to either $k$ different classes, for supervised classification, or $k$ different tokens, for a tokenized task.

In the context of neural networks, consider a quiver (graph) $Q$ representing a network with $d$ input neurons (source vertices) and $k$ output neurons (sink vertices). This quiver encodes the transformation of $d$-dimensional input data to a $k$-dimensional output space, which may correspond to $k$ classes in supervised classification or $k$ tokens in tokenized tasks.

When doing a feed-forward pass through a neural network with weights $W$ and activation function $f$ (that may, in principle, be different at each neuron), each individual weight in $W$ is used as multiplication by the weight, that is, if $W_\alpha$ is the weight of arrow $\alpha$, then the signal is passed from neuron $s(\alpha)$ to $t(\alpha)$ multiplied by $W_\alpha$. Since multiplication by a scalar is a linear map, this implies that the set of weights of the neural network defines a quiver representation. Following \cite{armenta1}, we define a \emph{neural network} to be a pair $(W, f)$ where $W$ is a representation of $Q$ and $f = (f_{q})_{q \in Q_{0}}$ a neuron-wise activation function. 
If $q$ is an input or output neuron, we set $f_q$ to be the identity function.
Each neural network has its own induced network function also known as its feed-forward function, which is a function that we denote by
\[
    \Psi(W,f): \mathbb{R}^d \to \mathbb{R}^k.
\]
Here, we consider the network function to output the logits of the network and not a distribution obtained after a softmax. The network function can be decomposed following Theorem 6.4 of \cite{armenta1} and Theorem 3.3 of \cite{armenta2} in the following way. First, define the representation space of $Q$ as the set $\Rep(Q)$ formed by the quiver representations of $Q$ that have one-dimensional spaces assigned to each vertex (called thin quiver representations).  Define now the \textit{knowledge map}
\[
    \varphi(W,f): \mathbb{R}^d \to \Rep(Q)
\]
which depends on the neural network $(W,f)$ and associates to each input $x$ a quiver representation that we denote $\varphi(W,f)(x)$. This quiver representation has one-dimensional vector spaces in each vertex of the quiver and the linear maps associated to the arrows are given by
\begin{equation}
\Big( \varphi(W,f)(x) \Big)_\alpha = \left\{    \begin{array}{ll}
W_\alpha x_{s(\alpha)} & \text{ if } s(\alpha) \text{ is an input vertex,}  \\
W_\alpha & \text{ if } s(\alpha) \text{ is a bias vertex,} \\
W_\alpha \dfrac{a(W,f)_{s(\alpha)}(x) }{ p(W,f)_{s(\alpha)}(x) } & \text{ if } s(\alpha) \text{ is a hidden vertex, } \\
\end{array} \right.
\end{equation}
for every $\alpha \in Q_1$. Here, $a(W,f)_q(x)$ is the activation of neuron $q \in Q_0$ and $p(W,f)_q(x)$ is the corresponding pre-activation that we set to $1$ when $q \in Q_0$ is an input neuron. We see that $(\varphi(W,f)(x))_{\alpha}$ is not well defined when $p(W,f)_{s(\alpha)}(x) = 0$. However, the set of all $x$ with this property is of measure zero, so we ignore them (see Remark 6.3 in \citep{armenta1}). 
%Observe that we define the pre-activation of input neurons to be equal to one and their activations to be equal to the $q$-th feature of $x$, which means the source vertices (input neurons) of $Q$ are labelled from $1$ to $d$. 
We label the input neurons of $Q$ from $1$ to $d$, to make them correspond to the features of inputs in order.
Also note that for bias vertices both activation and pre-activation are equal to $1$. In this way, we can simply write $\Big( \varphi(W,f)(x) \Big)_\alpha = W_\alpha \dfrac{a(W,f)_{s(\alpha)}(x) }{ p(W,f)_{s(\alpha)}(x) }$, for all $\alpha \in Q_1$.

Denote by $\Big(\varphi(W,f)(x),1\Big)$ the neural network with fixed weights given by the quiver representation $\varphi(W,f)(x)$ and identity activation function at every neuron. Note that we do not train this neural network. 
Instead, as proved by \cite{armenta1}, for every input $x\in\mathbb{R}^{d}$, and letting $1_d=(1,...,1)^T \in \mathbb{R}^d$ we have the following result.
\begin{Theorem} \label{thm:netfunc}
$\Psi(W,f)(x) = \Psi\Big(\varphi(W,f)(x),1\Big)(1_d)$.
\end{Theorem}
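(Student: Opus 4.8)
The plan is to prove the identity by a single induction over a topological ordering of the vertices of $Q$ — the quiver underlying a neural network is acyclic, so such an ordering exists — establishing the stronger fact that the two networks agree \emph{neuron by neuron}, not merely at the output. For $q \in Q_0$ write $a(W,f)_q(x)$ and $p(W,f)_q(x)$ for the activation and pre-activation of $q$ in $(W,f)$ on input $x$, and write $\widetilde a_q$ for the activation of $q$ in the identity-activation network $\big(\varphi(W,f)(x),1\big)$ on input $1_d$; since that network has the identity at every neuron, $\widetilde a_q$ is also the pre-activation of $q$ there whenever $q$ is not an input neuron. The invariant I would carry along the induction is
\[
  \widetilde a_q \;=\; p(W,f)_q(x) \qquad \text{for all } q \in Q_0 .
\]

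For the base case: if $q$ is an input neuron then $\widetilde a_q = (1_d)_q = 1 = p(W,f)_q(x)$, using the convention that pre-activations at input neurons equal $1$ together with the fact that we feed $1_d$; if $q$ is a bias vertex then likewise $\widetilde a_q = 1 = p(W,f)_q(x)$. This is precisely why the evaluation point must be $1_d$. For the inductive step, let $q$ be a hidden or output neuron and suppose the invariant holds for every source $s(\alpha)$ of an arrow $\alpha$ with $t(\alpha) = q$ (all such vertices precede $q$ in the chosen order). Using the unified formula $\big(\varphi(W,f)(x)\big)_\alpha = W_\alpha\, a(W,f)_{s(\alpha)}(x) / p(W,f)_{s(\alpha)}(x)$ recorded just before the statement, the pre-activation of $q$ in $\big(\varphi(W,f)(x),1\big)$ on input $1_d$ equals
\[
  \sum_{\alpha:\, t(\alpha)=q} \big(\varphi(W,f)(x)\big)_\alpha\, \widetilde a_{s(\alpha)} \;=\; \sum_{\alpha:\, t(\alpha)=q} W_\alpha\, \frac{a(W,f)_{s(\alpha)}(x)}{p(W,f)_{s(\alpha)}(x)}\, p(W,f)_{s(\alpha)}(x) \;=\; \sum_{\alpha:\, t(\alpha)=q} W_\alpha\, a(W,f)_{s(\alpha)}(x),
\]
which is exactly $p(W,f)_q(x)$; as the activation at $q$ is the identity, $\widetilde a_q$ equals this quantity, so the invariant propagates. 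The cancellation $\tfrac{a_{s(\alpha)}}{p_{s(\alpha)}}\cdot p_{s(\alpha)} = a_{s(\alpha)}$ is the heart of the argument: the arrow maps of $\varphi(W,f)(x)$ are rigged so that the ``missing'' nonlinearity $f_{s(\alpha)}$ is absorbed into the weight, and therefore a purely linear (identity-activation) pass through $\varphi(W,f)(x)$ reconstructs the true pre-activations of $(W,f)$.

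Finally, evaluating the invariant at the $k$ output neurons — where $f_q$ is the identity in $(W,f)$ as well, so $p(W,f)_q(x) = \big(\Psi(W,f)(x)\big)_q$ — yields $\big(\Psi(\varphi(W,f)(x),1)(1_d)\big)_q = \widetilde a_q = p(W,f)_q(x) = \big(\Psi(W,f)(x)\big)_q$ for every output $q$, which is the desired identity. I do not anticipate a genuine obstacle; the only care needed is bookkeeping: running the induction on a general acyclic quiver rather than a strictly layered MLP, handling input, bias, and hidden vertices uniformly through the $a/p$ formula, and restricting attention to the full-measure set of inputs $x$ for which no hidden pre-activation $p(W,f)_{s(\alpha)}(x)$ vanishes, so that both $\varphi(W,f)(x)$ and the cancellation above are well defined (see Remark 6.3 in \citep{armenta1}).
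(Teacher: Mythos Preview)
Your argument is correct. The induction on a topological order of $Q_0$ with invariant $\widetilde a_q = p(W,f)_q(x)$ is exactly the right bookkeeping device, and the cancellation you highlight is indeed the whole content of the statement; the base cases and the restriction to the full-measure set where no hidden pre-activation vanishes are handled cleanly.

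As for the comparison: the paper does \emph{not} supply its own proof of this theorem. It is stated in Section~3 as background, attributed to \cite{armenta1} (their Theorem~6.4), and is not among the results proved in Appendix~B. So there is nothing in the present paper to compare your argument against beyond the citation. Your write-up is a self-contained reconstruction of the standard inductive proof one would expect to find in the cited source, and it stands on its own.
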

This means that the network function factorizes through the space of representations $\Rep(Q)$ via the maps $\varphi(W,f):\mathbb{R}^d \to \Rep(Q)$ and $\ev_{1_d}:\Rep(Q) \to \mathbb{R}^k$ where $\ev_{1_d}(V) = \Psi(V,1)(1_d)$. In other words,
\[
    \Psi(W,f) = \ev_{1_d} \circ \varphi(W,f).
\]
Note that the quiver representation $\varphi(W,f)(x)$ is a linear object, in which each layer is given by a matrix. This quiver representation is used to obtain the output of the network by evaluating $\Psi\Big(\varphi(W,f)(x),1\Big)$, which is a linear map, and therefore it can be reduced from a composition of linear maps to a single matrix, given by the product of all these matrices as done in the work of \cite{armenta2}. Observe that for more complicated architectures like DenseNets or transformers, the order of the product of these matrices has to be taken into account. 
We denote this map by $\pi:\Rep(Q) \to \Mat_{\mathbb{R}}(d, k)$, which is also known as tensor network contraction in many-body quantum systems \citep{tennets}. We denote the matrix induced by a neural network $(W,f)$ on an input sample $x$ by $\M(W,f)(x) = \pi \circ \varphi(W,f)(x)$. We therefore have that the network function decomposes as
\[
    \Psi(W,f) = \ev_{1} \circ \pi \circ \varphi(W,f) = \ev_1 \circ \M(W,f)
\]
where $\ev_1 : \Mat_{\mathbb{R}}(k,d) \to \mathbb{R}^k$ is the \textit{evaluation map}, that sends a matrix $M$ to the product $M1_d$ where $1_d = (1,...,1)^T$. 

We move now to the notion of isomorphisms of neural networks. Let $(W,f)$ and $(V,g)$ be two neural networks where $W$ and $V$ are representations of the same quiver $Q$. 
A \emph{morphism} of neural networks $\tau : (W,f) \to (V,g)$ is a morphism of quiver representations $\tau : W \to V$ such that $\tau_q$ is the identity map when $q\in Q_0$ is a source or a sink vertex and $\tau_q f_q = g_q \tau_q$ for all $q \in Q_{0}$. 
We call a morphism $\tau : (W,f) \to (V,g)$ an \emph{isomorphism} of neural networks if $\tau: W \to V$ is an isomorphism of representations. 
We say that two neural networks $(W,f)$ and $(V,g)$ are \emph{isomorphic}, denoted $(W,f)\cong (V,g)$, if there exists an isomorphism $\tau : (W,f) \to (V,g)$.

The first property one can prove concerning isomorphic neural networks is Theorem 4.13 in the work of \cite{armenta1}, which states that the network function is invariant under isomorphisms, which means that isomorphic neural networks have exactly the same network function. This result implies that neural networks fit naturally into an algebraic context and are therefore suited to be studied by algebraic methods.

\begin{Theorem}
If $(W,f)\cong (V,g)$, then $\Psi(W,f)(x)=\Psi(V,g)(x)$ for every input $x \in \mathbb{R}^{d}$.
\end{Theorem}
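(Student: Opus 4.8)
The plan is to prove this by a direct analysis of the forward pass, tracking how an isomorphism rescales the signal at each neuron. First I would unpack what it means for $\tau : (W,f) \to (V,g)$ to be an isomorphism of neural networks on thin representations: since each $\tau_q$ is an invertible linear endomorphism of the one-dimensional space $W_q = \mathbb{R}$, it is multiplication by some nonzero scalar $\lambda_q \in \mathbb{R}$, and $\lambda_q = 1$ whenever $q$ is a source vertex (in particular an input or a bias vertex) or a sink vertex. The morphism condition $\tau_{t(\alpha)} W_\alpha = V_\alpha \tau_{s(\alpha)}$, read between one-dimensional spaces, becomes $V_\alpha = (\lambda_{t(\alpha)}/\lambda_{s(\alpha)})\, W_\alpha$ for every arrow $\alpha \in Q_1$, and the activation-intertwining condition $\tau_q f_q = g_q \tau_q$ becomes $g_q(\lambda_q z) = \lambda_q f_q(z)$ for all $z \in \mathbb{R}$ and all $q \in Q_0$.

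Next I would fix an input $x \in \mathbb{R}^d$ and prove, by induction along a topological ordering of $Q_0$, the two identities $p(V,g)_q(x) = \lambda_q\, p(W,f)_q(x)$ and $a(V,g)_q(x) = \lambda_q\, a(W,f)_q(x)$ for every neuron $q$. The base case is the input and bias vertices, where $\lambda_q = 1$ and both networks produce the same values by definition ($x_q$ for inputs, $1$ for biases). For the inductive step at a hidden or output vertex $q$, I would expand $p(V,g)_q(x) = \sum_{\alpha:\, t(\alpha)=q} V_\alpha\, a(V,g)_{s(\alpha)}(x)$, substitute $V_\alpha = (\lambda_q/\lambda_{s(\alpha)})\, W_\alpha$ together with the inductive hypothesis $a(V,g)_{s(\alpha)}(x) = \lambda_{s(\alpha)}\, a(W,f)_{s(\alpha)}(x)$; the factors $\lambda_{s(\alpha)}$ cancel arrow by arrow, leaving $\lambda_q\, p(W,f)_q(x)$. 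Applying $g_q$ and using $g_q(\lambda_q z) = \lambda_q f_q(z)$ yields the corresponding identity for activations. Since every sink vertex carries the identity activation and has $\lambda_q = 1$, evaluating at the output neurons gives $\Psi(V,g)(x) = \Psi(W,f)(x)$, which is the claim.

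I expect the only real difficulty to be bookkeeping rather than a conceptual obstacle: one must be careful that bias vertices genuinely count as source vertices (so that $\tau$ is the identity there and the constant $1$ is preserved under the isomorphism), that a topological ordering of $Q_0$ compatible with the recursive definition of pre-activations as sums over incoming arrows exists — this is exactly what makes the argument cover architectures with skip connections, since the quiver $Q$ already encodes the full connectivity — and that the argument nowhere divides by a vanishing pre-activation, since, unlike the knowledge map $\varphi(W,f)$, the raw quantities $p(W,f)_q$ and $a(W,f)_q$ are defined on all of $\mathbb{R}^d$, so no measure-zero set needs to be excluded here. As an alternative route one could invoke Theorem~\ref{thm:netfunc} and show that $\tau$ induces an isomorphism $\varphi(W,f)(x) \cong \varphi(V,g)(x)$ restricting to the identity on sources and sinks, but checking that $\ev_{1_d}$ is constant on such isomorphism classes amounts to the same layer-by-layer cancellation, so the direct forward-pass induction is the cleaner presentation.
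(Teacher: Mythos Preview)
Your proof is correct. The paper does not actually prove this theorem itself---it cites it as Theorem~4.13 of \cite{armenta1}---but it records in Appendix~B, as a ``Corollary of Theorem 4.13,'' exactly the intermediate identities $a(V,g)_q(x) = \tau_q\, a(W,f)_q(x)$ and $p(V,g)_q(x) = \tau_q\, p(W,f)_q(x)$ that you establish by induction, so your argument is essentially the one the paper is invoking.
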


\section{Theoretical Results}

In this section, we present our mathematical contributions, accompanied by rigorous proofs in Appendix \hyperref[app:proofs]{B}. Notably, our findings are universally applicable, as they do not rely on any assumptions regarding data distribution, network performance, or architectural design.

The first important property of all the previous constructions is their invariance under isomorphisms of neural networks. If we denote by $a(W,f)_q(x)$ the activation of neuron $q$ on the neural network $(W,f)$ when feeding the input $x$, and we take an isomorphism of neural networks $\tau : (W,f) \to (V,g)$, then, as a consequence of the proof of Theorem 4.13 by \cite{armenta1} we have that
\[
    a(V,g)_q(x) = \tau_q a(W,f)_q(x).
\]
This means that even though isomorphic neural networks have the same network function, their inner activations can be completely different. Since an isomorphism $\tau$ is determined by the choice of non-zero values $\tau_q$ per neuron $q$, there are infinitely many neural networks isomorphic to $(W,f)$ with this property. Therefore, invariance under isomorphism must be incorporated into any candidate of inner data representation of a neural network if one wants them to be algebraically consistent. Otherwise, any argument about the properties of a network using the outputs of hidden neurons is rendered inconsistent by applying any isomorphism to it.

Our findings uniformly exhibit this specific type of invariance, highlighting the vital importance of the upcoming theorem, given that all other invariants are contingent upon the knowledge map $\varphi(W,f)$.

\begin{Theorem}\label{thm:inducedQuivReps}
If $(W,f)\cong (V,g)$, then for every $x$ the quiver representations $\varphi(W,f)(x)$ and $\varphi(V,g)(x)$ are isomorphic.
\end{Theorem}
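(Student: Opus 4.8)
The plan is to show that the very scalars defining the neural network isomorphism $\tau$ already assemble into an isomorphism of the induced thin quiver representations. Let $\tau : (W,f) \to (V,g)$ be an isomorphism of neural networks. Since $W$ and $V$ are thin representations of $Q$ (one-dimensional spaces at each vertex), each $\tau_q$ is just a nonzero real number, and the quiver-morphism condition reads $\tau_{t(\alpha)} W_\alpha = V_\alpha \tau_{s(\alpha)}$ for every arrow $\alpha \in Q_1$, i.e.\ $V_\alpha = \tau_{t(\alpha)} W_\alpha \tau_{s(\alpha)}^{-1}$. I would take as candidate isomorphism $\sigma : \varphi(W,f)(x) \to \varphi(V,g)(x)$ the collection $\sigma_q := \tau_q$, and then verify the morphism identity $\sigma_{t(\alpha)}\big(\varphi(W,f)(x)\big)_\alpha = \big(\varphi(V,g)(x)\big)_\alpha\,\sigma_{s(\alpha)}$ for all $\alpha$.

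First I would record the two scaling relations governing how activations and pre-activations transform under $\tau$. The relation $a(V,g)_q(x) = \tau_q\, a(W,f)_q(x)$ is already available, being quoted above from the proof of Theorem 4.13 of \cite{armenta1}. From it I would deduce the companion relation for pre-activations: propagating through the definition of $p(V,g)_q(x)$ as the $V$-weighted sum $\sum_{t(\alpha)=q} V_\alpha\, a(V,g)_{s(\alpha)}(x)$ of incoming activations and substituting $V_\alpha \tau_{s(\alpha)} = \tau_q W_\alpha$ gives
\[
p(V,g)_q(x) \;=\; \tau_q\, p(W,f)_q(x)
\]
for every hidden vertex $q$, with the convention $p=1$ (and $\tau_q=1$) at input and bias vertices.

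The key step is then a one-line computation. Because numerator and denominator in the knowledge-map formula both scale by the same factor $\tau_{s(\alpha)}$, the ratio $a(W,f)_{s(\alpha)}(x)/p(W,f)_{s(\alpha)}(x)$ is unchanged when passing from $(W,f)$ to $(V,g)$; combining this with $V_\alpha = \tau_{t(\alpha)} W_\alpha \tau_{s(\alpha)}^{-1}$ yields
\[
\big(\varphi(V,g)(x)\big)_\alpha \;=\; \tau_{t(\alpha)}\,\tau_{s(\alpha)}^{-1}\,\big(\varphi(W,f)(x)\big)_\alpha ,
\]
which is precisely the morphism condition for $\sigma_q = \tau_q$. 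As every $\tau_q$ is invertible, $\sigma$ is an isomorphism of thin quiver representations, so $\varphi(W,f)(x) \cong \varphi(V,g)(x)$, as claimed.

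I do not expect a genuine obstacle; the only points requiring care are bookkeeping. One must establish the pre-activation scaling relation by the same induction over vertices in topological order that underlies the activation relation — and it is exactly here that the neural-network morphism condition $\tau_q f_q = g_q \tau_q$ is used, via $g_q(\tau_q t) = \tau_q f_q(t)$ to pass from scaled pre-activations back to scaled activations. One must also treat the input and bias vertices separately, where $\tau_q$ is forced to equal $1$ and the knowledge-map formula takes its other two branches; but in those branches $s(\alpha)$ contributes $x_{s(\alpha)}$ or $1$, which are manifestly $\tau$-invariant, so the identity persists. Throughout, the measure-zero set of inputs on which some pre-activation vanishes is excluded, as in Remark 6.3 of \cite{armenta1}.
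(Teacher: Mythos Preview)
Your proposal is correct and follows essentially the same approach as the paper's proof: both use the scaling relations $a(V,g)_q(x)=\tau_q\,a(W,f)_q(x)$ and $p(V,g)_q(x)=\tau_q\,p(W,f)_q(x)$ to show that the same family $\tau_q$ furnishes the isomorphism between $\varphi(W,f)(x)$ and $\varphi(V,g)(x)$. Your write-up is in fact slightly tidier in that it avoids the intermediate division by $W_\alpha$ (which the paper's version tacitly assumes nonzero) and explicitly addresses the input/bias branches, but the substance is identical.
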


The quiver representations $\varphi(W,f)(x)$ can be quite large to be saved in memory for several input samples $x$ as they are of the same size as the network. However, if one is interested in the prediction of the network, that is, the output of the network, then, as proved by \cite{armenta2}, the quiver representation $\varphi(W,f)(x)$ can be mapped to a matrix via tensor network contraction \citep{tennets}, and the output of the network depends solely on this matrix. This mapping is an embedding, and therefore, no information is lost from the quiver representation $\varphi(W,f)(x)$. Our next result shows that the matrices induced by the quiver representations are invariant under isomorphisms, and therefore remain valid algebraic data representations of the neural network which are easier and less costly to manipulate than the whole quiver representations.

\begin{Theorem}\label{thm:sameMatrices}
If $(W,f)\cong (V,g)$, then $\M(W,f)(x) = \M(V,g)(x)$ for all $x \in \mathbb{R}^{d}$.
\end{Theorem}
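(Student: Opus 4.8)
The plan is to deduce the statement from Theorem \ref{thm:inducedQuivReps} together with the fact that the tensor-network contraction map $\pi$ only depends on a thin quiver representation up to the ``gauge'' at its internal vertices. Recall that by construction $\M(W,f)(x) = \pi\circ\varphi(W,f)(x)$, and that Theorem \ref{thm:inducedQuivReps} gives, for every fixed $x$, an isomorphism $\varphi(W,f)(x)\cong\varphi(V,g)(x)$ in $\Rep(Q)$. Hence it suffices to prove the following purely representation-theoretic claim: if $R,R'\in\Rep(Q)$ are isomorphic via an isomorphism that restricts to the identity on the source and sink vertices of $Q$, then $\pi(R)=\pi(R')$. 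Granting this, I would apply it to $R=\varphi(W,f)(x)$ and $R'=\varphi(V,g)(x)$, noting that the isomorphism between them furnished by the proof of Theorem \ref{thm:inducedQuivReps} is the identity on sources and sinks (it is induced from the isomorphism of neural networks $\tau$, which satisfies $\tau_q=\mathrm{id}$ whenever $q$ is a source or a sink), and conclude $\M(W,f)(x)=\M(V,g)(x)$ for all $x\in\mathbb{R}^d$.

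For the key claim I would argue directly from the description of $\pi$ as tensor network contraction. Writing $R=(R_q,R_\alpha)$ with each $R_q$ one-dimensional and each $R_\alpha$ a scalar, the $(i,j)$ entry of the matrix $\pi(R)$ is a sum, over directed paths $\gamma$ from the input vertex $j$ to the output vertex $i$, of the product of the scalars $R_\alpha$ along $\gamma$ (with the factor ordering prescribed by the architecture in the non-feedforward case). An isomorphism $\sigma:R\to R'$ is a tuple of nonzero scalars $(\sigma_q)_{q\in Q_0}$ satisfying $\sigma_{t(\alpha)}R_\alpha=R'_\alpha\sigma_{s(\alpha)}$, i.e.\ $R'_\alpha=\sigma_{t(\alpha)}R_\alpha\sigma_{s(\alpha)}^{-1}$. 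Substituting this into the product over a path $\gamma=(\alpha_1,\dots,\alpha_n)$ from $j$ to $i$ and using that scalars commute, the factor $\sigma_{t(\alpha_\ell)}$ contributed by $\alpha_\ell$ cancels against $\sigma_{s(\alpha_{\ell+1})}^{-1}$ contributed by $\alpha_{\ell+1}$, since $t(\alpha_\ell)=s(\alpha_{\ell+1})$; the telescoping leaves $\prod_\ell R'_{\alpha_\ell}=\sigma_i\,(\prod_\ell R_{\alpha_\ell})\,\sigma_j^{-1}$. Because $\sigma_i=\sigma_j=1$ by hypothesis, every path contributes the same value to $\pi(R')$ as to $\pi(R)$, so $\pi(R)=\pi(R')$.

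The step I expect to be the main obstacle is the boundary bookkeeping: $\pi$ genuinely remembers the scalars attached to source and sink vertices (these are exactly what $\ev_1$ and the output read), so the argument really does require the isomorphism to fix those vertices, and this hinges on extracting the precise form of the isomorphism constructed in the proof of Theorem \ref{thm:inducedQuivReps}. A secondary, more cosmetic technicality is that for architectures where $\pi$ involves a prescribed ordering of matrix factors (DenseNets, transformers), one should check the telescoping is insensitive to that ordering; it is, because along any fixed path each internal $\sigma_q$ occurs exactly once as $\sigma_q$ and once as $\sigma_q^{-1}$ in adjacent positions. Finally, I note that one can instead bypass the explicit computation by invoking \cite{armenta2}, where $\pi$ is identified with an embedding of the moduli space of thin representations of $Q$ into $\Mat_\mathbb{R}(k,d)$ and therefore factors through the quotient by isomorphism; combined with Theorem \ref{thm:inducedQuivReps}, this gives the result immediately.
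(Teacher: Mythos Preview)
Your proposal is correct. The paper's own proof is precisely your final paragraph: it invokes Theorem \ref{thm:inducedQuivReps} to obtain $\varphi(W,f)(x)\cong\varphi(V,g)(x)$, notes that isomorphic thin representations define the same point in the moduli space, and then cites Lemma~7.4 of \cite{armenta2} (that $\pi$ embeds the moduli space into $\Mat_\mathbb{R}(k,d)$) to conclude equality of matrices. That is the entire argument in the paper.

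Your main contribution beyond this is the explicit telescoping computation showing directly that $\pi$ is constant on isomorphism classes of thin representations (for isomorphisms trivial at sources and sinks). This is a genuinely more elementary route: it avoids importing the moduli-space machinery from \cite{armenta2} and makes the proof self-contained within the present paper. Your identification of the key boundary condition---that the isomorphism furnished by the proof of Theorem \ref{thm:inducedQuivReps} is $\tau$ itself, hence the identity on sources and sinks---is exactly right and is the point that makes the telescoping close up. The paper's approach is terser and leans on prior work; yours unpacks what that prior work is actually doing in this one-dimensional setting.
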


On a classification task (or on a tokenized task), the prediction is interpreted as the pre-assigned label (or token) of the output neuron with the highest value before the softmax activation. The softmax activation is a monotonic function which means the prediction is known from the logits and the softmax is used to transform the logits into probabilities. Therefore, multiplying the matrix $\M(W,f)(x)$ by the vector with only ones $1_d = (1,...,1) \in \mathbb{R}^d$ gives the logits of the network $(W, f)$, see \cite{armenta2}. Then we can partition the matrix space as follows. Define 
\[
    \mathcal{M}_j = \Big\{ M \in \Mat_{\mathbb{R}}(k, d) \ : \ \text{if we denote } 
    M 1_d = \begin{pmatrix}
    a_1 \\
    \vdots \\
    a_k
    \end{pmatrix} \text{ then } a_j > a_i \text{ for all } j\not=i \Big\}
\]
We see from this that an input sample $x$ is classified as class $j$ by the network (independently of the probabilities obtained after applying the softmax to the logits) if and only if $\M(W,f)(x) \in \mathcal{M}_j$, and there is no clear decision if $\M(W,f)(x) \in \mathcal{M}_0$ where we have defined
\[
    \mathcal{M}_0 = \Mat_{\mathbb{R}}(k,d) - \bigcup_{j=1}^{k} \mathcal{M}_j.
\]

In view of the next theorem, for the rest of the paper, we shall suppose that $\M(W,f)(x) \notin \mathcal{M}_{0}$ for every input $x$.

\begin{Theorem}\label{thm:M_0Measure}
The set $\mathcal{M}_{0}$ is of measure zero.
\end{Theorem}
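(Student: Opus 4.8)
The plan is to identify $\mathcal{M}_0$ explicitly and then cover it by finitely many hyperplanes in the matrix space $\Mat_{\mathbb{R}}(k,d)\cong\mathbb{R}^{kd}$. By definition, $M\in\bigcup_{j=1}^k\mathcal{M}_j$ precisely when the vector $M1_d=(a_1,\dots,a_k)^T$ has a \emph{unique} strictly largest entry; hence $M\in\mathcal{M}_0$ exactly when the maximum of $a_1,\dots,a_k$ is attained at two or more indices. In particular, if $M\in\mathcal{M}_0$ then there exist indices $i\neq j$ with $a_i=a_j$. Therefore
\[
    \mathcal{M}_0 \subseteq \bigcup_{1\le i<j\le k} H_{ij}, \qquad H_{ij}:=\bigl\{\, M\in\Mat_{\mathbb{R}}(k,d) \ :\ (M1_d)_i = (M1_d)_j \,\bigr\}.
\]

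Next I would show each $H_{ij}$ has Lebesgue measure zero. The condition $(M1_d)_i=(M1_d)_j$ can be written as $(e_i-e_j)^T M 1_d = 0$, which is a linear equation in the entries of $M$: using the Frobenius pairing, $H_{ij}$ is the kernel of the linear functional $M\mapsto \langle (e_i-e_j)\,1_d^{\,T},\, M\rangle$ on $\Mat_{\mathbb{R}}(k,d)$. Since $e_i-e_j\neq 0$ and $1_d=(1,\dots,1)^T\neq 0$, the rank-one matrix $(e_i-e_j)1_d^{\,T}$ is nonzero, so this functional is not identically zero. Consequently $H_{ij}$ is a proper linear subspace of $\mathbb{R}^{kd}$ (a hyperplane of dimension $kd-1$), and proper affine/linear subspaces of Euclidean space have Lebesgue measure zero.

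Finally, a finite union of measure-zero sets is measure zero, so $\bigcup_{i<j}H_{ij}$ is measure zero, and hence so is its subset $\mathcal{M}_0$.

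I do not anticipate a genuine obstacle here; the only points requiring a little care are (i) translating ``no clear decision'' into the statement that the top value of $M1_d$ is attained at least twice, so that the containment in $\bigcup_{i<j}H_{ij}$ is valid, and (ii) verifying that the relevant linear functional is nonzero, which is where the hypothesis $1_d\neq 0$ is used (the argument would fail if one evaluated at the zero vector instead). Everything else is standard measure theory on $\mathbb{R}^{kd}$.
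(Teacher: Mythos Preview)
Your proposal is correct and follows essentially the same approach as the paper: cover $\mathcal{M}_0$ by the hyperplanes $\{(M1_d)_i=(M1_d)_j\}$ for $i\neq j$, note each is a proper linear subspace of $\mathbb{R}^{kd}$ and hence Lebesgue-null, and conclude by finite subadditivity. If anything, your write-up is slightly more careful than the paper's, since you phrase the relation as a containment $\mathcal{M}_0\subseteq\bigcup_{i<j}H_{ij}$ rather than an equality and explicitly verify the defining linear functional is nonzero.
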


By the previous constructions, we obtain a decomposition of the matrix space
\[
\Mat_{\mathbb{R}}(k, d) = \bigcup_{j=0}^k \mathcal{M}_j.
\]

\begin{Remark}
    The following theorem is the only mathematical result in this paper motivated by a specific task, either classification or tokenized task. The sole reason for this is that, in classification or tokenized tasks, a prediction is made based on choosing one output neuron, that is, the one with the highest value after the forward propagation. Note, however, that it is a result concerning only the matrix space and not neural networks.
\end{Remark}
\begin{Theorem} \label{thm:convex}
    For each $j>0$, the subset $\mathcal{M}_j$ is convex in the matrix space $\Mat_{\mathbb{R}}(k, d)$.
\end{Theorem}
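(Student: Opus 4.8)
The plan is to exhibit $\mathcal{M}_j$ as a finite intersection of preimages, under a linear map, of open half-spaces of $\mathbb{R}^k$, and then invoke the standard facts that the preimage of a convex set under a linear map is convex and that an intersection of convex sets is convex.

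First, recall that the evaluation map $\ev_1 \colon \Mat_{\mathbb{R}}(k,d) \to \mathbb{R}^k$, $M \mapsto M 1_d$, is $\mathbb{R}$-linear. For each index $i \neq j$, let $U_{ji} = \{\, a = (a_1,\dots,a_k) \in \mathbb{R}^k \ : \ a_j > a_i \,\}$, which is an open half-space of $\mathbb{R}^k$ and hence convex. Set $H_{ji} = \ev_1^{-1}(U_{ji}) = \{\, M \in \Mat_{\mathbb{R}}(k,d) \ : \ (M 1_d)_j > (M 1_d)_i \,\}$; since the preimage of a convex set under a linear map is convex, each $H_{ji}$ is convex.

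Unravelling the definition, $M \in \mathcal{M}_j$ if and only if $(M1_d)_j > (M1_d)_i$ for every $i \neq j$, i.e.
\[
\mathcal{M}_j = \bigcap_{i \neq j} H_{ji}.
\]
This is a finite intersection of convex sets, hence convex, which is exactly the claim.

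There is essentially no obstacle here: the only point worth noting is that the defining conditions involve \emph{strict} inequalities, but an open half-space is convex just as its closed counterpart is, so nothing changes. If one prefers a fully self-contained argument, convexity can be verified directly: for $M, M' \in \mathcal{M}_j$ and $t \in [0,1]$, linearity of $\ev_1$ gives, for each $i\ne j$,
\[
\big((tM+(1-t)M')1_d\big)_j = t\,(M1_d)_j + (1-t)\,(M'1_d)_j > t\,(M1_d)_i + (1-t)\,(M'1_d)_i = \big((tM+(1-t)M')1_d\big)_i ,
\]
where for $t \in (0,1)$ both summands respect the strict inequality since $t>0$ and $1-t>0$, and the cases $t \in \{0,1\}$ are trivial; hence $tM+(1-t)M' \in \mathcal{M}_j$.
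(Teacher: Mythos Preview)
Your proof is correct and follows essentially the same idea as the paper's: both arguments exploit the linearity of $M \mapsto M1_d$ to reduce the question to convexity of the region in $\mathbb{R}^k$ where the $j$-th coordinate is strictly largest. The paper carries out the direct convex-combination verification you give at the end, while you additionally phrase the result as $\mathcal{M}_j = \bigcap_{i\neq j}\ev_1^{-1}(U_{ji})$; this is a cosmetic repackaging rather than a different method. If anything, your treatment of the strict inequalities (handling $t\in\{0,1\}$ separately) is slightly more careful than the paper's, which writes only $\geq$ in the final step.
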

In this case, convexity means that for matrices $A, B \in \mathcal{M}_j$ and for every scalar $\lambda \in [0,1]$ we have that $\lambda A + (1- \lambda)B \in \mathcal{M}_j$. Note that we do not make any reference to convexity of the parameter space of neural networks whatsoever.

Given this convexity property in the matrix space, we realize that if a neural network is well-trained on a classification task, that is, the accuracy both on train and test data is high (or the loss both on train and test is sufficiently low), then most of the training samples $x$ get mapped by $\M(W,f)$ to the convex subset of matrices $\mathcal{M}_j$ where $j$ is the correct label of $x$, for most $x$'s. Also, if we work only with correctly classified train data, they will all lie inside the corresponding subset $\mathcal{M}_j$.

\begin{Remark}
    It is worth noting that the next theorem is highly general in that it is independent of the architecture, activation function, training algorithm used, achieved performance and nature of the data or the task. Even more, the network doesn't even have to be trained for it to hold. We put the proof here as it is simple enough and it illustrates how these matrices can be used to prove properties concerning norms and the network logits.
\end{Remark}

\begin{Theorem}\label{thm:matrixNorm}
Let $(W,f)$ be a neural network, $x,y \in \mathbb{R}^d$ be two inputs, and denote by $\textbf{\emph{vec}}(\M(W,f)(x))$ the vector form of $\M(W,f)(x)$. 
We have that 
\begin{enumerate}
\item $
{\parallel}\M(W,f)(y)-\M(W,f)(x){\parallel}_\infty \geq {\parallel}\Psi(W,f)(y)-\Psi(W,f)(x){\parallel}_\text{\emph{max}}
$, and
\item $
{\parallel}\textbf{\emph{vec}}\Big(\M(W,f)(y) - \M(W,f)(x)\Big){\parallel}_1 \geq {\parallel}\Psi(W,f)(y)-\Psi(W,f)(x){\parallel}_p
$, 
for all vector $p$-norm, where $1 \leq p \leq \infty$.
\end{enumerate}
\end{Theorem}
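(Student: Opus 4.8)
The plan is to reduce both inequalities to the single structural fact, already established in the excerpt, that $\Psi(W,f) = \ev_1 \circ \M(W,f)$, i.e. $\Psi(W,f)(z) = \M(W,f)(z)\,1_d$ for every input $z$. Setting $D := \M(W,f)(y) - \M(W,f)(x) \in \Mat_{\mathbb{R}}(k,d)$ and using that $\ev_1$ is linear, we get
\[
\Psi(W,f)(y) - \Psi(W,f)(x) = D\,1_d,
\]
whose $i$-th coordinate is exactly the $i$-th row sum $\sum_{j=1}^{d} D_{ij}$. After this reduction, each claim is an elementary inequality relating a norm of the matrix $D$ to a norm of its vector of row sums, and the argument uses nothing about the architecture, activation, training, or task.

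For part (1), I would invoke the standard characterization of the operator $\infty$-norm as the maximum absolute row sum, $\|D\|_\infty = \max_{i} \sum_{j=1}^{d} |D_{ij}|$. Then, by the triangle inequality applied row by row,
\[
\|D\,1_d\|_{\text{max}} = \max_{i}\Bigl| \sum_{j=1}^{d} D_{ij} \Bigr| \;\le\; \max_{i} \sum_{j=1}^{d} |D_{ij}| \;=\; \|D\|_\infty ,
\]
which is precisely the asserted inequality once we substitute back $D = \M(W,f)(y) - \M(W,f)(x)$ and $D\,1_d = \Psi(W,f)(y) - \Psi(W,f)(x)$.

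For part (2), I would first use the monotonicity of vector $p$-norms in $p$ — namely $\|v\|_p \le \|v\|_1$ for every $p \in [1,\infty]$ — to reduce to the case $p = 1$. It then remains to bound $\|D\,1_d\|_1$ by $\|\mathbf{vec}(D)\|_1 = \sum_{i,j}|D_{ij}|$, which again follows from the triangle inequality:
\[
\|D\,1_d\|_1 = \sum_{i=1}^{k}\Bigl| \sum_{j=1}^{d} D_{ij}\Bigr| \;\le\; \sum_{i=1}^{k}\sum_{j=1}^{d} |D_{ij}| \;=\; \|\mathbf{vec}(D)\|_1 .
\]

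I do not expect any genuine obstacle here: the content is entirely in the identity $\Psi = \ev_1 \circ \M$ plus two applications of the triangle inequality. The only point requiring care is bookkeeping around conventions — confirming that $\|\cdot\|_\infty$ on matrices denotes the operator (induced) $\infty$-norm, hence the maximal absolute row sum; that $\|\cdot\|_{\text{max}}$ on vectors is the entrywise maximum; and that $\mathbf{vec}(\cdot)$ simply stacks the entries — so that the chains of inequalities line up with the exact norms appearing in the statement. I would also note explicitly that $1_d$ has all entries equal to $1$, so that $(D\,1_d)_i$ really is the unweighted row sum, which is what makes the triangle-inequality bounds tight in form.
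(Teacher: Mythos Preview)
Your proposal is correct and follows essentially the same approach as the paper: both reduce everything to the identity $\Psi(W,f)(z) = \M(W,f)(z)\,1_d$ and then apply elementary norm inequalities, with Part~2 handled identically via the triangle inequality and $\|\cdot\|_1 \ge \|\cdot\|_p$. The only cosmetic difference is in Part~1, where the paper plugs $z = 1_d$ directly into the supremum definition of the operator $\infty$-norm (using $\|1_d\|_{\max}=1$) rather than invoking the closed-form row-sum formula, but this is the same inequality seen from two equivalent characterizations.
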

\begin{proof} 
\begin{enumerate}
    \item By definition,
    \begin{align*}
    {\parallel} \M(W,f)(y) - \M(W,f)(x){\parallel}_\infty &= \sup_{z \in \mathbb{R}^{d}} \frac{{\parallel} \Big(\M(W,f)(y) - \M(W,f)(x)\Big)z{\parallel}_\text{max}}{{\parallel} z{\parallel}_\text{max}}\\
    &\geq \frac{{\parallel} \Big(\M(W,f)(y) - \M(W,f)(x) \Big) 1_d {\parallel}_\text{max}}{{\parallel} 1_d{\parallel}_\text{max}}\\
    &= {\parallel} \M(W,f)(y) 1_d - \M(W,f)(x) 1_d{\parallel}_\text{max}\\
    &= {\parallel} \Psi(W,f)(y) - \Psi(W,f)(x){\parallel}_\text{max}.
    \end{align*}
    \item Using the triangle inequality and the fact that summing the elements of a row in $\M(W,f)(x)$ gives the corresponding logit,
    \begin{align*}{\parallel}\textbf{vec}\Big(\M(W,f)(y) - \M(W,f)(x)\Big){\parallel}_1 &= \sum_{i_1 = 1}^{k}\sum_{i_2=1}^{d}|\M(W,f)(y)_{i_1,i_2} - \M(W,f)(x)_{i_1,i_2}|\\
        &\geq \sum_{i_1 = 1}^{k}|\sum_{i_2=1}^{d}\M(W,f)(y)_{i_1,i_2} - \M(W,f)(x)_{i_1,i_2}|\\
        &= \sum_{i_1 = 1}^{k}|\M(W,f)(y)_{i_1,:}1_d - \M(W,f)(x)_{i_1,:}1_d|\\
        &= \sum_{i_1 = 1}^{k}|\Psi(W,f)(y)_{i_1}-\Psi(W,f)(x)_{i_1}|\\
        &= {\parallel}\Psi(W,f)(y)-\Psi(W,f)(x){\parallel}_1\\
        &\geq {\parallel}\Psi(W,f)(y)-\Psi(W,f)(x){\parallel}_p.
    \end{align*}
\end{enumerate}
\end{proof}

% Since ${\parallel}x{\parallel}_p \geq {\parallel}x{\parallel}_{\text{max}}$ for all $p \geq 1$ and for any vector $x \in \mathbb{R}^d$, it directly follows from Theorem \ref{thm:matrixNorm} that \[ {\parallel}\M(W,f)(y)-\M(W,f)(x){\parallel}_p \geq {\parallel}\Psi(W,f)(y)-\Psi(W,f)(x){\parallel}_{\text{max}}.\]

Using the previous result as a foundation, we devise a statistical figure of merit based on these matrices for a geometric picture of the training distribution, specifically class-dependent hyper-ellipsoids for a classification task. Considering $y=x+\epsilon$, where $\epsilon$ is adversarial noise, we try to detect adversarial examples by analyzing the matrix $\M(W,f)(y)$ and comparing it to the ones coming from correctly classified training samples $\M(W,f)(x)$.

%We can interpret these last results as the matrices carrying more information than the logits of the network and, consequently, the softmax output. 
%Assuming a classification (or tokenized) task, we can produce class-dependent hyper-ellipsoids from the training dataset. In the next section, we devise a statistical figure of merit to decide when a new data sample should be trusted as coming from the training distribution based on these hyper-ellipsoids. We then try to detect adversarial examples based on the previous result by putting $y=x+\epsilon$ where $\epsilon$ is adversarial noise and analyzing the matrices $\M(W,f)(y)$ to the ones coming from the correctly classified training samples $\M(W,f)(x)$.
%Using the previous result as a foundation, we devise a statistical figure of merit based on these matrices for how close is a new data sample from the training distribution. Geometrically, when taking class-dependent (resp. token-dependent) statistics of the induced matrices,   We then put $y=x+\epsilon$ where $\epsilon$ is adversarial noise and try to detect adversarial examples with matrix statistics on the training dataset.

%This is our main motivation to do statistics with the matrices $\M(W,f)(x)$ for some training samples $x$ that we can use as a figure of merit to how close a prediction on a new sample $y$ is to the training data distribution as perceived by the neural network. For example, in the case where $y=x+\epsilon$ and $\epsilon$ is adversarial noise, which is the reason why we use matrix statistics to detect adversarial examples in our experiments.

\section{Experiment Design}\label{section:detection}

\subsection{Trained Networks and Adversarial Attacks}

We work with 8 multi-layered perceptrons with the hidden layer architectures listed in Table \ref{tab:architectures}. 
We separately train these architectures on both datasets MNIST and FashionMNIST with different hyper-parameters achieving different performances that were not necessarily state-of-the-art, for a total of 17 trained networks. The performance and training hyperparameters of each one of these networks can be found in Appendix \hyperref[app:performance]{C}. Here, we show results for 6 of these networks whose hyperparameters and performance are given in Table \ref{tab:main_experiments}. For each one of the trained networks, we produced adversarial examples using the torchattacks library \citep{torchattacks} with 21 different attack methods listed in Table \ref{tab:attacks}. Note, however, that some of these adversarial attack methods do not appear in the tables of results for some of our bigger networks because they did not converge after days of computing. Particularly for architectures 7 and 8, which have 1.54 billion and 1.98 billion parameters, respectively.

\begin{table}[t]
\caption{Each list in the middle represents the number of neurons per layer of different MLPs used in our experiments.}
\label{tab:architectures}
\begin{center}
    \begin{tabular}{c|l|r}
    Index & Hidden Layer Architecture & Number of \\ 
     &  &  Parameters \\
    \hline
    1 & $5 \times [500]$ & 1.39 M \\
    2 & $8 \times [1000]$ & 7.79 M \\
    3 & $20 \times [1000]$ & 19.79 M  \\
    4 & $[814, 351, 118, 467, 823, 191, 756, 628, 935, 270]$ & 2.9 M  \\
    5 & $2 \times [10000]$ & 107.94 M \\
    6 & $5 \times [10000]$ & 407.94 M \\
    7 & $[675000, 1500, 1500, 1500, 1500]$ & 1.54 B  \\
    8 & $[2500000]$ & 1.98 B
    \end{tabular}
\end{center}
\end{table}

\begin{table}[ht]
\caption{Training hyperparameters of our experiments shown in the main paper. Indexing corresponds to the experiments in our code (specified in the constants script). Here, Arch stands for the architecture index in Table \ref{tab:architectures}; LR for learning rate; BS for batch size; LR Sch is an integer representing every how many epochs the learning rate is multiplied by 0.1.}
\label{tab:main_experiments}
\begin{center}
    \begin{tabular}{ccccccccccc}
    Experiment & Arch & Optimizer & Dataset & LR & BS & Epochs & LR Sch & Accuracy \\
    index &  &  &  &  &  &  &  &  train/test\\ \hline
    2 & 1 & Adam & Fashion & 1e-06 & 32 & 81 & 20 & 0.79/0.78\\
    6 & 2 & Adam & MNIST & 0.001 & 128 & 6 & 3 & 0.98/0.97\\
    10 & 5 & Momentum & Fashion & 0.01 & 32 & 11 & 5 & 0.96/0.9\\
    12 & 6 & SGD & Fashion & 0.01 & 64 & 16 & - & 0.91/0.87\\
    15 & 8 & Momentum & MNIST & 0.001 & 256 & 11 & - & 0.97/0.96\\
    16 & 8 & Momentum & Fashion & 0.001 & 256 & 11 & 5 & 0.91/0.87 \\
    \hline
    \end{tabular}
\end{center}
\end{table}

\begin{table*}[t]
\caption{Adversarial attack methods from torchattacks \cite{torchattacks} used in our experiments.}
\label{tab:attacks}
\begin{center}
    \begin{tabular}{p{2.5cm} p{12cm}}
        \toprule
        Abbreviation & Description \\
        \midrule
        GN & Adds random noise from a Gaussian distribution to inputs. \\
        FGSM & Perturbs inputs using gradients to maximize loss \citep{goodfellow2015explaining}. \\
        RFGSM & Adds random noise before FGSM for variance \citep{tramer2017ensemble}. \\
        PGD & Iteratively applies FGSM and projects onto epsilon-ball \citep{madry2017towards}. \\
        EOTPGD & Combines PGD with random transformations \citep{cohen2019certified}. \\
        FFGSM & Speeds up FGSM by scaling the gradient perturbation \citep{wong2020fast}. \\
        TPGD & PGD with theoretical guarantees \citep{pmlr-v97-zhang19p}. \\
        MIFGSM & Uses momentum to stabilize FGSM over multiple steps \citep{dong2018boosting}. \\
        UPGD & Extends PGD for better performance in adversarial training. \\
        DIFGSM & Varies inputs to improve transferability of attacks \citep{xie2019improving}. \\
        NIFGSM & Incorporates Nesterov momentum for faster convergence \citep{lin2019nesterov}. \\
        PGDRS & Combines PGD with randomized smoothing for defence \citep{cohen2019certified}. \\
        VMIFGSM & Adds variance to MIFGSM for robustness improvement \citep{wang2021enhancing}. \\
        VNIFGSM & Combines variance reduction with NIFGSM for better performance. \\
        CW & Optimizes perturbations with a loss minimization \citep{carlini2017towards}. \\
        PGDL2 & Uses L2 norm constraint in PGD for perturbation control \citep{madry2017towards}. \\
        PGDRSL2 & PGD Randomized Smoothing with an L2 norm constraint. \\
        DeepFool & Minimizes perturbation to move inputs across decision boundaries \citep{moosavi2016deepfool}. \\
        SparseFool & Generates sparse perturbations that alter few pixels \citep{modas2019sparsefool}. \\
        OnePixel & Modifies one pixel at a time to create perturbations \citep{su2019onepixel}. \\
        Pixle & A pixel-based adversarial attack technique \citep{ge2022pixle}. \\
        \bottomrule
    \end{tabular}
\end{center}
\end{table*}

\subsection{Detection Algorithm}

% We do not claim we detect adversarial examples better than other methods, we only claim that we are able to detect several different attack methods with simple statistics on the computed matrices $M(W,f)(x)$ on a small subset of the training set.

Motivated by Theorem \ref{thm:matrixNorm}, which tells us that the distance between matrices is greater or equal to the output of the network, we work out a detection method for adversarial examples based on the induced matrices $\M (W,f)(x)$ on training data $x$.

We shall write $(W,f)$ for a trained neural network and $\dataset$ for the dataset. 
Let $\subdataset = \{x_{i}\}_{i=1}^{N}$ be a subset of $N=10k$ uniformly drawn inputs of the dataset $\dataset$ and denote by $\subdataset_{j}$ the subset of $\subdataset$ containing the samples corresponding to class $j$. In our case, we took one thousand samples per $\subdataset_{j}$.
For each class $j$, we compute the \emph{mean matrix} and the \emph{standard deviation matrix}, defined as 
\begin{equation}\label{eq:matricesMjSj}
M^{j} = \frac{1}{|\subdataset_{j}|}\sum_{x \in \subdataset_{j}}\M(W,f)(x)\: \text{ and }\: S^{j}_{i_1, i_2} = \frac{1}{|\subdataset_{j}|-1}\left(\sum_{x \in \subdataset_{j}}(\M(W,f)(x)_{i_{1},i_{2}}-M^{j}_{i_{1},i_{2}})^{2}\right),
\end{equation}
respectively. 
Element-wise computation of means and standard deviations for each class produces hyper-ellipsoids in matrix space $\Mat_{\mathbb{R}}(d,k) \cong \mathbb{R}^{dk}$ whose center is the mean and the axes are determined by the standard deviation in each coordinate.
If, for every class $j$, we compute $M^{j}$ and $S^{j}$ with respect to correctly classified train data only, then it follows from Theorem \ref{thm:convex} that the hyper-ellipsoid for class $j$ is contained in the subset $\mathcal{M}_j \subset \Mat_{\mathbb{R}}(d,k)$ corresponding to its class. 

% We take the matrices produced by $N=1000$ random samples of each class in the training set. For each class $j$, we take the samples $x_1^j,...,x_N^j$ and compute the matrices $\M(W,f)(x_1^j),...,\M(W,f)(x_N^j)$. Define $M^j$ to be the mean of the matrices $\M(W,f)(x_1^j),...,\M(W,f)(x_N^j)$ and $S^j$ their standard deviation. We do not claim the data follows a Gaussian distribution in matrix space. What we claim is that the matrices $\M(W,f)(x_1^j),...,\M(W,f)(x_N^j)$ are closed around $M^j$ and are spread out according to $S^j$. 

Let $\subdataset' = \{x_{i}'\}_{i=1}^{N'}$ be another subset of $N'=10k$ uniformly drawn inputs of the dataset $\dataset$. For this case, we do not sample a specific amount of train samples for each class and rather take them at random from the whole training set.
Let $\varepsilon, \varepsilon' > 0$. 
Suppose the neural network predicts that $x_{i}'$ is of class $j$. 
We denote by $n_{i}^{\varepsilon}(j)$ the number of entries of the matrix $\M (W,f)(x_i')$ satisfying $(S^{j})_{i_{1},i_{2}} \leq \varepsilon$ and $\M(W,f)(x_{i}')_{i_{1},i_{2}} > \varepsilon$. 
% We denote by $n_{i}^{\delta}(j)$ the number of entries $(i_{1},i_{2})$ of the matrix $\M(W,f)(x_{i})$ satisfying $(M^{j} - \delta S^{j})_{i_{1}i_{2}} \leq \M(W,f)(x_{i})_{i_{1}i_{2}} \leq (M^{j} + \delta S^{j})_{i_{1}i_{2}}$ and by $n_{i}^{\varepsilon}(j)$ the number of entries satisfying $(S^{j})_{i_{1}i_{2}} \leq \varepsilon$ and $\M(W,f)(x_{i})_{i_{1}i_{2}} > \varepsilon$. 
To justify our interest in $n_{i}^{\varepsilon}(j)$, we found that there are several entries of the matrices of standard deviations that are close to zero, which means that most of the samples have small variations along these coordinates. 
We remark that this may be because of the simplicity of the datasets and the networks and that more complicated datasets and networks may require different analyses. 

% Write $\subdataset_j'$ for the subset of $\subdataset'$ containing the samples of class $j$. 
We first compute the mean $\mu^{\varepsilon}$ and standard deviation $\sigma^{\varepsilon}$ of the set $\{n_{i}^{\varepsilon}(j)\}_{i=1}^{N'}$. 
Note that the notation $n_i^\varepsilon(j)$ means that the neural network predicts that $x_i$ is in class $j$, not that $x_i$ is labelled as class $j$. 
Then, given a threshold value $t^{\varepsilon}>0$, we trust a new data sample $x_{\ell}$, classified as being of class $j$, if $n_{\ell}^{\varepsilon'}(j) \geq \mu^{\varepsilon}-t^{\varepsilon}\sigma^{\varepsilon}$, and reject it if not. In other words, we only trust the sample if enough entries of $\M(W,f)(x_{\ell})$ are not close to zero. This is summarized in Algorithm \ref{alg:detection}.

\begin{figure}
\caption*{Algorithm 1: Summary of the detection method described in Section \ref{section:detection}.}
\begin{algorithm}[H]
\caption{Detection method}\label{alg:detection}
\begin{algorithmic}[1]
\Require $\varepsilon, \varepsilon', t^{\varepsilon}, \mu^{\varepsilon},\sigma^{\varepsilon},$ and an input $x_{\ell}$ \Comment{\textit{In our case, we did a grid search to obtain $\varepsilon,\varepsilon',$ and $t^{\varepsilon}$.}}
\State \textbf{Compute:} $n_{\ell}^{\varepsilon'}(j)$
\If{$n_{\ell}^{\varepsilon'}(j) \geq \mu^{\varepsilon}-t^{\varepsilon}\sigma^{\varepsilon}$}
\State Trust the sample
\Else \State Reject the sample
\EndIf
\end{algorithmic}
%\caption{Our detection algorithm for adversarial examples based on matrix statistics.}
\end{algorithm}
\end{figure}

\section{Experimental Results}

We ran a simple grid search on the parameters $\varepsilon, \varepsilon',t^{\varepsilon}$ for each individual network. We then choose the parameters $\varepsilon, \varepsilon',t^{\varepsilon}$ which give a higher difference between the percentage of good defences across all attack methods and wrong rejections on natural data. This worked for 15 of our experiments but for experiments 7 and 11 we couldn't find a high percentage of good defences that were considerably higher than the percentage of wrong rejections, so in their case, we chose parameters for which the percentage of good defences was high and the percentage of wrong rejections was strictly smaller. We did this, as sometimes one would prefer to have a network that rejects a lot of samples even when they are natural data to avoid adversarial examples passing unchecked, as in the case of LLM jailbreaks. Also note that different values of $\varepsilon, \varepsilon',t^{\varepsilon}$ work for different networks. 

Here we show results for 6 different experiments and put the results for the rest in Appendix \hyperref[app:data]{A}. We can see that for each model, with the exception of experiments 7 and 11 as mentioned before, we can find parameters $\varepsilon, \varepsilon',t^{\varepsilon}$ that give high good defence probability and low wrong rejection probability. From all the results in the grid search, we take the top 3 results with the highest difference. 
We note that during the grid search, we found a lot of choices of the parameters in which the rejection level $\mu^{\varepsilon}-t^{\varepsilon}\sigma^{\varepsilon}$ was less than or equal to 0, therefore admitting every input as natural data. This shows there is no perfect choice for the parameters $\varepsilon, \varepsilon',t^{\varepsilon}$ and that this has to be done carefully. 
Also, there were choices of parameters for which both the good defence probability and the wrong rejection probabilities were both high and very close. 
This means that setting up the optimal values for $\varepsilon, \varepsilon',t^{\varepsilon}$ requires having access to some matrices of adversarial examples.

%By comparing new samples to statistical measures derived from correctly classified training data, the algorithm ensures that only samples closely aligned with the training distribution are accepted, enhancing the model's reliability and security. 

Nevertheless, we found that even in the best of cases, our detection algorithm is not perfect. Certain attacks like SparseFool, OnePixel and mainly Pixle seem to be good at fooling the detection method. Although in most of our experiments, we were not able to detect all of the adversarial examples coming from all the attacks, we do detect the majority of them while keeping a good rate of trusted natural data and a good accuracy on it. Also, we do not engineer specific detection methods for each attack method. Here, we show tables with detected adversarial examples and successful adversarial examples per attack method in Tables \ref{tab:results_2}, \ref{tab:results_6}, \ref{tab:results_10}, 
\ref{tab:results_12}, \ref{tab:results_15}, 
\ref{tab:results_16}, 
for each one of the 6 trained networks shown in Table \ref{tab:main_experiments}.

\begin{table}[t]
\begin{minipage}[t]{0.48\linewidth}\centering
\caption{Experiment 2 - FashionMNIST - Architecture 1. $t^\varepsilon = 0.1$, $\varepsilon=0.05$, and $\varepsilon' = 0.01$.}
%$t^\varepsilon = 0.1$, $\varepsilon=0.05$, and $\varepsilon' = 0.01$.}
\rowcolors{2}{gray!25}{white}
\begin{tabular}{c|c|c|c}
%\multicolumn{4}{c}{\textbf{FashionMNIST --- Architecture 1}}\\
%\multicolumn{4}{c}{\textbf{$t^\varepsilon = 0.1$, $\varepsilon=0.05$, and $\varepsilon' = 0.01$.}}\\
\rowcolor{white}
\textbf{Attack} & \multicolumn{3}{c}{\textbf{Adversarial examples}}\\
\rowcolor{white}
\textbf{methods} & \textbf{Detected} & \textbf{Successful} & \textbf{Total}\\
\hline
GN & 4335 & 0 & 4335\\
FGSM & 4973 & 0 & 4973\\
RFGSM & 5052 & 0 & 5052\\
PGD & 5051 & 0 & 5051\\
EOTPGD & 5050 & 0 & 5050\\
FFGSM & 4937 & 0 & 4937\\
TPGD & 4885 & 0 & 4885\\
MIFGSM & 5064 & 0 & 5064\\
UPGD & 5064 & 0 & 5064\\
DIFGSM & 4988 & 0 & 4988\\
NIFGSM & 5320 & 0 & 5320\\
PGDRS & 5038 & 0 & 5038\\
VMIFGSM & 4998 & 0 & 4998\\
VNIFGSM & 5062 & 0 & 5062\\
CW & 5069 & 0 & 5069\\
PGDL2 & 9004 & 0 & 9004\\
PGDRSL2 & 4363 & 0 & 4363\\
DeepFool & 6177 & 0 & 6177\\
SparseFool & 6375 & 2122 & 8497\\
OnePixel & 7852 & 2122 & 9974\\
Pixle & 6 & 2339 & 2345\\
\hline
& \textbf{Trusted} & \textbf{Wrongly} & \\ 
&  & \textbf{rejected} & \\ 
\textbf{Test data} & 9801 & 199\footnotemark & 10000\\ 
\textbf{Accuracy}\footnotemark & 0.78349 & - & 0.7887
\end{tabular}
\label{tab:results_2}
\end{minipage}\hfill%
\begin{minipage}[t]{0.48\linewidth}\centering
\caption{Experiment 6 - MNIST - Architecture 2. $t^\varepsilon = 0.75$, $\varepsilon=0.1$, and $\varepsilon' = 0.01$.}
\rowcolors{2}{gray!25}{white}
\begin{tabular}{c|c|c|c}
\rowcolor{white}
\textbf{Attack} & \multicolumn{3}{c}{\textbf{Adversarial examples}}\\
\rowcolor{white}
\textbf{methods} & \textbf{Detected} & \textbf{Successful} & \textbf{Total}\\
\hline
GN & 2878 & 200 & 3078\\
FGSM & 3177 & 71 & 3248\\
RFGSM & 3500 & 36 & 3536\\
PGD & 3409 & 124 & 3533\\
EOTPGD & 3094 & 439 & 3533\\
FFGSM & 3428 & 46 & 3474\\
TPGD & 3480 & 33 & 3513\\
MIFGSM & 3432 & 84 & 3516\\
UPGD & 3456 & 60 & 3516\\
DIFGSM & 3410 & 39 & 3449\\
NIFGSM & 3423 & 99 & 3522\\
PGDRS & 3324 & 148 & 3472\\
VMIFGSM & 3413 & 30 & 3443\\
VNIFGSM & 3394 & 118 & 3512\\
CW & 3455 & 51 & 3506\\
PGDL2 & 9007 & 0 & 9007\\
PGDRSL2 & 1756 & 1325 & 3081\\
DeepFool & 4412 & 625 & 5037\\
SparseFool & 6572 & 417 & 6989\\
OnePixel & 8896 & 1104 & 10000\\
Pixle & 78 & 301 & 379\\
\hline
& \textbf{Trusted} & \textbf{Wrongly} & \\ 
& & \textbf{rejected} & \\ 
\textbf{Test data} & 7768  & 2232 & 10000\\ 
\textbf{Accuracy} & 0.96910 & - & 0.9735
\end{tabular}
\label{tab:results_6}
\end{minipage}
\end{table}
\footnotetext[1]{We ran the detection algorithm on the test set. The elements of the test set were not altered. This number corresponds to how many samples of the test set were incorrectly classified as being adversarial examples.}
\footnotetext[2]{The accuracy under Trusted corresponds to accuracy on trusted data only, and the one at the right to accuracy on the whole test set.}

\begin{table} 
\begin{minipage}[t]{0.48\linewidth} 
\caption{Experiment 10 - FashionMNIST - Architecture 5. $t^\varepsilon = 10^{-5}$, $\varepsilon\approx 0.27826$, and $\varepsilon' \approx 0.02154$.}
\rowcolors{2}{gray!25}{white}
\begin{tabular}{c|c|c|c}
\rowcolor{white}
%\multicolumn{4}{c}{\textbf{FashionMNIST --- Architecture 5}}\\
%\rowcolor{white}
%\multicolumn{4}{c}{\textbf{$t^\varepsilon = 10^{-5}$, $\varepsilon\approx 0.27826$, and $\varepsilon' \approx 0.02154$.}}\\
%\rowcolor{white}
\textbf{Attack} & \multicolumn{3}{c}{\textbf{Adversarial examples}}\\
\rowcolor{white}
\textbf{methods} & \textbf{Detected} & \textbf{Successful} & \textbf{Total}\\
\hline
GN & 4959 & 5 & 4964\\ 
FGSM & 6309 & 3 & 6312\\ 
RFGSM & 6891 & 2 & 6893\\ 
PGD & 6024 & 866 & 6890\\ 
EOTPGD & 6024 & 867 & 6891\\ 
FFGSM & 5813 & 820 & 6633\\ 
TPGD & 5932 & 805 & 6737\\ 
MIFGSM & 5965 & 894 & 6859\\ 
UPGD & 5965 & 894 & 6859\\ 
DIFGSM & 5670 & 816 & 6486\\ 
NIFGSM & 5979 & 986 & 6965\\ 
PGDRS & 5937 & 879 & 6816\\ 
VMIFGSM & 5913 & 862 & 6775\\ 
VNIFGSM & 5965 & 894 & 6859\\ 
CW & 5981 & 888 & 6869\\ 
PGDL2 & 9004 & 0 & 9004\\ 
PGDRSL2 & 0 & 5039 & 5039\\ 
DeepFool & 8466 & 8 & 8474\\ 
SparseFool & 8142 & 672 & 8814\\ 
OnePixel & 7371 & 2629 & 10000\\ 
Pixle & 398 & 816 & 1214\\
\hline
& \textbf{Trusted} & \textbf{Wrongly} & \\ 
& & \textbf{rejected} & \\ 
\textbf{Test data} & 7172 & 2828 & 10000\\ 
\textbf{Accuracy} & 0.91035 & - & 0.9049 \\
\end{tabular}
\label{tab:results_10}
\end{minipage}\hfill%
\begin{minipage}[t]{0.48\linewidth}
\caption{Experiment 12 - FashionMNIST - Architecture 6. $t^\varepsilon \approx 0.03480$, $\varepsilon \approx 0.22846$, and $\varepsilon' \approx 0.02154$.}
\rowcolors{2}{gray!25}{white}
\begin{tabular}{c|c|c|c}
%\multicolumn{4}{c}{\textbf{FashionMNIST --- Architecture 6}}\\
%\multicolumn{4}{c}{\textbf{$t^\varepsilon \approx 0.03480$, $\varepsilon \approx 0.22846$, and $\varepsilon' \approx 0.02154$.}}\\
\rowcolor{white}
\textbf{Attack} & \multicolumn{3}{c}{\textbf{Adversarial examples}}\\
\rowcolor{white}
\textbf{methods} & \textbf{Detected} & \textbf{Successful} & \textbf{Total}\\
\hline
GN & 764 & 3 & 767\\ 
FGSM & 974 & 8 & 982\\ 
RFGSM & 1054 & 10 & 1064\\ 
PGD & 1054 & 10 & 1064\\ 
EOTPGD & 1054 & 10 & 1064\\ 
FFGSM & 1027 & 8 & 1035\\ 
TPGD & 1032 & 10 & 1042\\ 
MIFGSM & 1050 & 10 & 1060\\ 
UPGD & 1050 & 10 & 1060\\ 
DIFGSM & 1040 & 8 & 1048\\ 
NIFGSM & 1059 & 10 & 1069\\ 
PGDRS & 1015 & 8 & 1023\\ 
VMIFGSM & 1003 & 8 & 1011\\ 
VNIFGSM & 1050 & 10 & 1060\\ 
CW & 1048 & 10 & 1058\\ 
PGDL2 & 9007 & 0 & 9007\\ 
PGDRSL2 & 775 & 3 & 778\\ 
DeepFool & 2702 & 14 & 2716\\ 
SparseFool & 5745 & 183 & 5928\\ 
OnePixel & 8277 & 1721 & 9998\\ 
Pixle & 0 & 226 & 226\\
\hline
& \textbf{Trusted} & \textbf{Wrongly} & \\ 
& & \textbf{rejected} & \\ 
\textbf{Test data} & 9921 & 79 & 10000\\ 
\textbf{Accuracy} & 0.87841 & - & 0.8778 \\
\end{tabular}
\label{tab:results_12}
\end{minipage}
\end{table}

\begin{table} 
\begin{minipage}[t]{0.48\linewidth}\centering
\caption{Experiment 15 - MNIST - Architecture 8. $t^\varepsilon \approx 0.00530$, $\varepsilon \approx 0.22846$, and $\varepsilon' \approx 0.02154$.}
\rowcolors{2}{gray!25}{white}
\begin{tabular}{c|c|c|c}
\rowcolor{white}
\textbf{Attack} & \multicolumn{3}{c}{\textbf{Adversarial examples}}\\
\rowcolor{white}
\textbf{methods} & \textbf{Detected} & \textbf{Successful} & \textbf{Total}\\
\hline
TPGD & 6334 & 0 & 6334\\ 
MIFGSM & 6335 & 0 & 6335\\ 
UPGD & 6335 & 0 & 6335\\ 
DIFGSM & 6240 & 0 & 6240\\ 
PGDRS & 6321 & 0 & 6321\\ 
PGDRSL2 & 5837 & 0 & 5837\\
\hline
& \textbf{Trusted} & \textbf{Wrongly} & \\ 
& & \textbf{rejected} & \\ 
\textbf{Test data} & 10000 & 0 & 10000\\ 
\textbf{Accuracy} & 0.9619 & - & 0.9619
\end{tabular}
\label{tab:results_15}
\end{minipage}\hfill%
\begin{minipage}[t]{0.48\linewidth}\centering
\caption{Experiment 16 - FashionMNIST - Architecture 8. $t^\varepsilon = 1.25$, $\varepsilon = 0.8$, and $\varepsilon' \approx 0.27826$.}
\rowcolors{2}{gray!25}{white}
\begin{tabular}{c|c|c|c}
\rowcolor{white}
\textbf{Attack} & \multicolumn{3}{c}{\textbf{Adversarial examples}}\\
\rowcolor{white}
\textbf{methods} & \textbf{Detected} & \textbf{Successful} & \textbf{Total}\\
\hline
GN & 6028 & 0 & 6028\\ 
PGD & 7571 & 0 & 7571\\ 
TPGD & 7306 & 0 & 7306\\ 
PGDRSL2 & 6088 & 0 & 6088\\
\hline
& \textbf{Trusted} & \textbf{Wrongly} & \\ 
& & \textbf{rejected} & \\ 
\textbf{Test data} & 9239 & 761 & 10000\\ 
\textbf{Accuracy} & 0.88267 & & 0.8727
\end{tabular}
\label{tab:results_16}
\end{minipage}
\end{table}

\section{Discussion}

Our research addresses the need for mathematically rigorous approaches in deep learning, particularly in the context of providing a figure of merit to the training distribution as perceived by the neural network. Our proposed detection algorithm for adversarial examples demonstrates the potential of our matrix statistics derived from correctly classified training data to identify trustworthy samples. Besides the use of these mathematics to give this figure of merit, one could think of further applications that we expose in this section.

\subsection{Out-of-Distribution Detection}
Here, we propose a different detection method for the potential purpose of detecting samples coming from another dataset, i.e., out-of-distribution data. Suppose that $(W,f)$ is a neural network trained on a dataset $\dataset$. Assuming a classification task and that $(W,f)$ is trained and achieves high accuracy on $\dataset$, Theorem \ref{thm:convex} tells us that almost every entry of $\M(W,f)(x)$ is in $\mathcal{M}_j$, where $j$ is the class of $x \in \dataset$. A natural test would be to look if, indeed, many entries are in $\mathcal{M}_j$. 

Using again the matrices $M^j$ and $S^j$ of equation \ref{eq:matricesMjSj} and the set $\subdataset' = \{x_{i}\}_{i=1}^{N'} \subseteq \dataset$ defined in Section \ref{section:detection}, we could compute, for all $x_i \in \subdataset'$, the number of entries $(i_1,i_2)$ of $\M(W,f)(x_{i})$ satisfying $(M^j-\delta S^j)_{i_{1},i_{2}} \leq \M(W,f)(x_{i})_{i_{1},i_{2}} \leq (M^{j}+\delta S^{j})_{i_{1},i_{2}}$ for a $\delta > 0$, which we denote by $n_{i}^{\delta}(j)$. Then, we compute the mean $\mu^\delta$ and the standard deviation $\sigma^\delta$ of the set $\{n_{i}^{\delta}(j)\}_{i = 1}^{N'}$. Given an input $x_\ell$, potentially coming from a different dataset, and a threshold value $t^\delta$, we trust that $x_\ell$ is in $\dataset$ if $\mu^\delta - t^\delta \sigma^\delta \leq n_{\ell}^\delta(j) \leq \mu^\delta + t^\delta \sigma^\delta$. 

For a stronger test, we could ask that $x_\ell$ passes the detection method described in Algorithm \ref{alg:detection} before doing this other test.

\subsection{Matrices of Subnetworks}

Theorem \ref{thm:matrixNorm} can be generalized to account for any subset of hidden layers. For instance, if we are interested in a subnetwork (or circuit in a transformer \citep{circuits}) from layer $i$ to layer $j$, and we denote by $\Psi(W,f)_{i:j}$ the network function starting in layer $i$ up to layer $j$, then Theorem \ref{thm:matrixNorm} holds for $\Psi(W,f)_{i:j}$. This could be applied to big networks in which computing the matrices coming from the whole network is too expensive, for example, on LLMs. On these huge models, one could calculate the matrices on layers at the beginning, middle and end either on the encoder or the decoder, to re-ask questions of mechanistic interpretability \citep{elhage2021, elhage2021mathematical, marks2024feature}. 

The fully connected layers of LLMs represent $2/3$ of their parameters. Yet it is still an open problem to understand what happens inside these MLPs \citep{bricken2023monosemanticity, elhage2022}, for which our work provides tools. For instance, the middle layer of these MLPs is considerably bigger than their input and output and this makes it difficult to analyze. One could produce the matrices of these MLPs per input and compare the different matrices instead of looking at individual neurons in the middle layer. Our architecture 8 represents precisely this type of MLP for which experiments 15 and 16 in Tables \ref{tab:results_15} and \ref{tab:results_16} show good results of adversarial example detection on a few of the attack methods we used. The matrices can be computed either on the forward pass of specific inputs and after neuron patching or ablation, and a similar figure of merit can be derived.

The lottery ticket hypothesis \cite{lottery} states that some subnetworks can achieve the same performance as the original network or even better. The matrices for the lottery ticket can be compared to those of the whole network to identify meaningful differences between them.

%There is also the problem of superposition \cite{elhage2022} that complexities the mechanistic interpretability task of finding meaning in neuron activations.

%In mechanistic interpretability, one patches neuron values to understand the contribution of each neuron to the output of the network. In this case, one can produce the matrices corresponding to the unpatched and the patch, or replaced activations. Instead of only looking at the output of the network, one can use the geometric interpretation of the matrices to get a deeper understanding.

\subsection{Generalization}

Generalization in deep learning is one of the biggest open problems \citep{zhang2017understanding, Zhang21still, Zhang2020Identity}. One of the ways of measuring generalization after training is to measure the difference between train and test performance (loss or accuracy). Using the matrices $\M(W,f)(x)$ we can form the hyper-ellipsoids per class for each of the training set and the validation/test set. In this setting, generalization can be thought of as the hypervolume of the intersection of these hyper-ellipsoids. This brings a completely different perspective to measuring generalization than just looking at the performance metrics. 

\subsection{Comparing Architectures on the Same Task}

Consider the case in which a task is fixed and we train different neural network architectures for it. Since the number of input and output neurons for all such networks are the same, the matrices they produce live in the same space since they will have the same dimensions. Even more, these matrices can be produced at different points during training to study the difference in the time evolution of the class-dependent hyper-ellipsoids for each architecture and their intersection with others. This provides a new way of comparing different architectures and their behaviour on the same task.

%\subsection{Learning Theory}
%Our work also has implications for learning theory \citep{vapnik95}, particularly in the context of understanding the training distribution and the behaviour of neural networks. The statistical measures derived from correctly classified training data can be seen as a way to quantify the complexity of the training distribution, which is a fundamental concept in learning theory. By analyzing the geometric properties of the class-dependent hyper-ellipsoids, we can gain insights into the learnability of the task and the capacity of the neural network.

\section{Conclusion}
%Our research addresses a critical gap in deep learning: the lack of mathematically rigorous results about how neural networks make predictions. As noted by \cite{zoom}, interpretability is a field in which \textit{there isn’t consensus on what the objects of study are, what methods we should use to answer them, or how to evaluate research results}. This emphasizes the importance of a mathematically sound approach like ours.

%By developing rigorous methods for understanding neural network behaviour, we can unlock new insights into the workings of these complex models and improve their reliability and security.

Our research addresses a critical gap in deep learning: the lack of mathematically rigorous results about how neural networks make predictions. As noted by \cite{zoom}, interpretability is a field in which \textit{there isn’t consensus on what the objects of study are, what methods we should use to answer them, or how to evaluate research results}. This emphasizes the importance of a mathematically sound approach like ours, particularly invariance under isomorphisms, which renders neural networks and the matrices they induce algebraically consistent.

This work establishes a foundational framework for understanding neural network behaviour, providing rigorous mathematical insights into the geometric and algebraic structures underlying these complex models. Our theoretical contributions pave the way for a deeper understanding of neural networks, transcending specific architectures and applications.

By developing a mathematical theory of neural network behaviour, we can unlock new insights into the workings of these models and improve their reliability and security. Our research demonstrates the essential role of mathematical rigour in illuminating the inner workings of deep learning, providing a solid foundation for future advances in the field.

In conclusion, our work takes a crucial step toward bridging the gap between the empirical successes and theoretical understanding of deep learning. As we continue to explore the mathematical landscape of neural networks, we may uncover even more profound implications for the future of artificial intelligence.

\bibliography{main}
\bibliographystyle{tmlr}

\appendix

\section*{Appendix A: Experimental Data} \label{app:data}

In this appendix, we provide extensive details on the results of our experiments, shown in Tables \ref{tab:results_0},
\ref{tab:results_1}, \ref{tab:results_3}, 
\ref{tab:results_4},
\ref{tab:results_5}, \ref{tab:results_7},\ref{tab:results_8}, 
\ref{tab:results_9}, 
\ref{tab:results_11}, 
\ref{tab:results_13}, and \ref{tab:results_14}.

\begin{table}
\begin{minipage}[t]{0.48\linewidth}\centering
\caption{Experiment 0 - MNIST - Architecture 1. $t^\varepsilon = 1.25$, $\varepsilon=0.05$, and $\varepsilon' = 0.05$.}
\rowcolors{2}{gray!25}{white}
\begin{tabular}{c|c|c|c}
\rowcolor{white}
\textbf{Attack} & \multicolumn{3}{c}{\textbf{Adversarial examples}}\\
\rowcolor{white}
\textbf{methods} & \textbf{Detected} & \textbf{Successful} & \textbf{Total}\\
\hline
GN & 1208 & 3 & 1211\\ 
FGSM & 1547 & 3 & 1550\\ 
RFGSM & 1640 & 3 & 1643\\ 
PGD & 1639 & 3 & 1642\\ 
EOTPGD & 1640 & 3 & 1643\\ 
FFGSM & 1583 & 3 & 1586\\ 
TPGD & 1481 & 5 & 1486\\ 
MIFGSM & 1641 & 3 & 1644\\ 
UPGD & 1641 & 3 & 1644\\ 
DIFGSM & 1585 & 3 & 1588\\ 
NIFGSM & 1663 & 1 & 1664\\ 
PGDRS & 1607 & 3 & 1610\\ 
VMIFGSM & 1578 & 3 & 1581\\ 
VNIFGSM & 1639 & 3 & 1642\\ 
CW & 1637 & 3 & 1640\\ 
PGDL2 & 9007 & 0 & 9007\\ 
PGDRSL2 & 1196 & 3 & 1199\\ 
DeepFool & 3486 & 5 & 3491\\ 
SparseFool & 4872 & 847 & 5719\\ 
OnePixel & 8741 & 1259 & 10000\\ 
Pixle & 13 & 1042 & 1055\\
\hline
& \textbf{Trusted} & \textbf{Wrongly} & \\ 
& & \textbf{rejected} & \\ 
\textbf{Test data} & 9065 & 935 & 10000\\ 
\textbf{Accuracy} & 0.90723 & - & 0.9159
\end{tabular}
\label{tab:results_0}
\end{minipage}\hfill%
\begin{minipage}[t]{0.48\linewidth}\centering
\caption{Experiment 1 - MNIST - Architecture 1. $t^\varepsilon = 0.1$, $\varepsilon=1.25$, and $\varepsilon' = 0.01$.}
\rowcolors{2}{gray!25}{white}
\begin{tabular}{c|c|c|c}
\rowcolor{white}
\textbf{Attack} & \multicolumn{3}{c}{\textbf{Adversarial examples}}\\
\rowcolor{white}
\textbf{methods} & \textbf{Detected} & \textbf{Successful} & \textbf{Total}\\
\hline
GN & 1208 & 3 & 1211\\ 
FGSM & 1547 & 3 & 1550\\ 
RFGSM & 1640 & 3 & 1643\\ 
PGD & 1639 & 3 & 1642\\ 
EOTPGD & 1640 & 3 & 1643\\ 
FFGSM & 1583 & 3 & 1586\\ 
TPGD & 1481 & 5 & 1486\\ 
MIFGSM & 1641 & 3 & 1644\\ 
UPGD & 1641 & 3 & 1644\\ 
DIFGSM & 1585 & 3 & 1588\\ 
NIFGSM & 1663 & 1 & 1664\\ 
PGDRS & 1607 & 3 & 1610\\ 
VMIFGSM & 1578 & 3 & 1581\\ 
VNIFGSM & 1639 & 3 & 1642\\ 
CW & 1637 & 3 & 1640\\ 
PGDL2 & 9007 & 0 & 9007\\ 
PGDRSL2 & 1196 & 3 & 1199\\ 
DeepFool & 3486 & 5 & 3491\\ 
SparseFool & 4872 & 847 & 5719\\ 
OnePixel & 8741 & 1259 & 10000\\ 
Pixle & 13 & 1042 & 1055\\
\hline
& \textbf{Trusted} & \textbf{Wrongly} & \\ 
& & \textbf{rejected} & \\ 
\textbf{Test data} & 9065 & 935 & 10000\\ 
\textbf{Accuracy} & 0.98518 & - & 0.9835
\end{tabular}
\label{tab:results_1}
\end{minipage}
\end{table}

\begin{table}
\begin{minipage}[t]{0.48\linewidth}\centering
\caption{Experiment 3 - FashionMNIST - Architecture 1. $t^\varepsilon = 0.1$, $\varepsilon=0.8$, and $\varepsilon' = 0.05$.}
\rowcolors{2}{gray!25}{white}
\begin{tabular}{c|c|c|c}
\rowcolor{white}
\textbf{Attack} & \multicolumn{3}{c}{\textbf{Adversarial examples}}\\
\rowcolor{white}
\textbf{methods} & \textbf{Detected} & \textbf{Successful} & \textbf{Total}\\
\hline
GN & 5439 & 769 & 6208\\ 
FGSM & 5386 & 682 & 6068\\ 
RFGSM & 7224 & 767 & 7991\\ 
PGD & 7220 & 767 & 7987\\ 
EOTPGD & 7217 & 770 & 7987\\ 
FFGSM & 6938 & 800 & 7738\\ 
TPGD & 7001 & 736 & 7737\\ 
MIFGSM & 7132 & 790 & 7922\\ 
UPGD & 7132 & 790 & 7922\\ 
DIFGSM & 6420 & 822 & 7242\\ 
NIFGSM & 6750 & 952 & 7702\\ 
PGDRS & 6830 & 811 & 7641\\ 
VMIFGSM & 6804 & 747 & 7551\\ 
VNIFGSM & 7146 & 773 & 7919\\ 
CW & 7069 & 746 & 7815\\ 
PGDL2 & 9004 & 0 & 9004\\ 
PGDRSL2 & 5475 & 797 & 6272\\ 
DeepFool & 8021 & 886 & 8907\\ 
SparseFool & 7388 & 1758 & 9146\\ 
OnePixel & 7064 & 2922 & 9986\\ 
Pixle & 439 & 942 & 1381\\
\hline
& \textbf{Trusted} & \textbf{Wrongly} & \\ 
&  & \textbf{rejected} & \\ 
\textbf{Test data} & 7946 & 2054 & 10000\\ 
\textbf{Accuracy} & 0.91065 & - & 0.8984
\end{tabular}
\label{tab:results_3}
\end{minipage}\hfill%
\begin{minipage}[t]{0.48\linewidth}\centering
\caption{Experiment 4 - MNIST - Architecture 2. $t^\varepsilon = 0.1$, $\varepsilon=0.01$, and $\varepsilon' = 0.01$.}
\rowcolors{2}{gray!25}{white}
\begin{tabular}{c|c|c|c}
\rowcolor{white}
\textbf{Attack} & \multicolumn{3}{c}{\textbf{Adversarial examples}}\\
\rowcolor{white}
\textbf{methods} & \textbf{Detected} & \textbf{Successful} & \textbf{Total}\\
\hline
GN & 265 & 579 & 844\\ 
FGSM & 856 & 181 & 1037\\ 
RFGSM & 694 & 411 & 1105\\ 
PGD & 921 & 184 & 1105\\ 
EOTPGD & 921 & 184 & 1105\\ 
FFGSM & 885 & 180 & 1065\\ 
TPGD & 917 & 169 & 1086\\ 
MIFGSM & 918 & 185 & 1103\\ 
UPGD & 918 & 185 & 1103\\ 
DIFGSM & 906 & 181 & 1087\\ 
NIFGSM & 924 & 179 & 1103\\ 
PGDRS & 856 & 190 & 1046\\ 
VMIFGSM & 847 & 167 & 1014\\ 
VNIFGSM & 918 & 185 & 1103\\ 
CW & 911 & 183 & 1094\\ 
PGDL2 & 9007 & 0 & 9007\\ 
PGDRSL2 & 700 & 146 & 846\\ 
DeepFool & 2145 & 380 & 2525\\ 
SparseFool & 4500 & 1642 & 6142\\ 
OnePixel & 8120 & 1872 & 9992\\ 
Pixle & 69 & 188 & 257\\
\hline
& \textbf{Trusted} & \textbf{Wrongly} & \\ 
&  & \textbf{rejected} & \\ 
\textbf{Test data} & 7995 & 2005 & 10000\\ 
\textbf{Accuracy} & 0.98086 & - & 0.981
\end{tabular}
\label{tab:results_4}
\end{minipage}
\end{table}

\begin{table}
\begin{minipage}[t]{0.48\linewidth}\centering
\caption{Experiment 5 - FashionMNIST - Architecture 2. $t^\varepsilon = 0.5$, $\varepsilon=0.3$, and $\varepsilon' = 0.1$.}
\rowcolors{2}{gray!25}{white}
\begin{tabular}{c|c|c|c}
\rowcolor{white}
\textbf{Attack} & \multicolumn{3}{c}{\textbf{Adversarial examples}}\\
\rowcolor{white}
\textbf{methods} & \textbf{Detected} & \textbf{Successful} & \textbf{Total}\\
\hline
GN & 3532 & 726 & 4258\\ 
FGSM & 4509 & 931 & 5440\\ 
RFGSM & 4743 & 1356 & 6099\\ 
PGD & 4842 & 1254 & 6096\\ 
EOTPGD & 5156 & 940 & 6096\\ 
FFGSM & 5089 & 700 & 5789\\ 
TPGD & 5084 & 747 & 5831\\ 
MIFGSM & 5294 & 747 & 6041\\ 
UPGD & 5306 & 735 & 6041\\ 
DIFGSM & 5249 & 752 & 6001\\ 
NIFGSM & 5143 & 901 & 6044\\ 
PGDRS & 4934 & 835 & 5769\\ 
VMIFGSM & 4839 & 854 & 5693\\ 
VNIFGSM & 5282 & 760 & 6042\\ 
CW & 5289 & 722 & 6011\\ 
PGDL2 & 9004 & 0 & 9004\\ 
PGDRSL2 & 3612 & 668 & 4280\\ 
DeepFool & 6714 & 1075 & 7789\\ 
SparseFool & 6916 & 1569 & 8485\\ 
OnePixel & 2855 & 7141 & 9996\\ 
Pixle & 0 & 1367 & 1367\\
\hline
& \textbf{Trusted} & \textbf{Wrongly} & \\ 
& & \textbf{rejected} & \\
\textbf{Test data} & 5634 & 4366 & 10000\\ 
\textbf{Accuracy} & 0.92829 & - & 0.8924
\end{tabular}
\label{tab:results_5}
\end{minipage}\hfill%
\begin{minipage}[t]{0.48\linewidth}\centering
\caption{Experiment 7 - FashionMNIST - Architecture 3. $t^\varepsilon \approx 0.00004$, $\varepsilon\approx 0.07743$, and $\varepsilon' \approx 0.02154$.}
\rowcolors{2}{gray!25}{white}
\begin{tabular}{c|c|c|c}
\rowcolor{white}
\textbf{Attack} & \multicolumn{3}{c}{\textbf{Adversarial examples}}\\
\rowcolor{white}
\textbf{methods} & \textbf{Detected} & \textbf{Successful} & \textbf{Total}\\
\hline
GN & 4370 & 4591 & 8961\\ 
FGSM & 5641 & 3350 & 8991\\ 
RFGSM & 4366 & 5265 & 9631\\ 
PGD & 5160 & 4473 & 9633\\ 
EOTPGD & 5672 & 3961 & 9633\\ 
FFGSM & 5307 & 4249 & 9556\\ 
TPGD & 5214 & 4314 & 9528\\ 
MIFGSM & 5676 & 3946 & 9622\\ 
UPGD & 5004 & 4618 & 9622\\ 
DIFGSM & 5668 & 3907 & 9575\\ 
NIFGSM & 4630 & 4999 & 9629\\ 
PGDRS & 4522 & 4909 & 9431\\ 
VMIFGSM & 4264 & 5159 & 9423\\ 
VNIFGSM & 4674 & 4944 & 9618\\ 
CW & 5611 & 3882 & 9493\\ 
PGDL2 & 9004 & 0 & 9004\\ 
PGDRSL2 & 5276 & 3720 & 8996\\ 
DeepFool & 5955 & 3714 & 9669\\ 
SparseFool & 4892 & 4862 & 9754\\ 
OnePixel & 5031 & 4968 & 9999\\ 
Pixle & 761 & 1268 & 2029\\
\hline
& \textbf{Trusted} & \textbf{Wrongly} & \\ 
& & \textbf{rejected} & \\ 
\textbf{Test data} & 8917 & 1083 & 10000\\ 
\textbf{Accuracy} & 0.84356 & - & 0.8304
\end{tabular}
\label{tab:results_7}
\end{minipage}
\end{table}

\begin{table}
\begin{minipage}[t]{0.48\linewidth}\centering
\caption{Experiment 8 - MNIST -  Architecture 3. $t^\varepsilon = 0.75$, $\varepsilon=0.01$, and $\varepsilon' = 0.01$.}
\rowcolors{2}{gray!25}{white}
\begin{tabular}{c|c|c|c}
\rowcolor{white}
\textbf{Attack} & \multicolumn{3}{c}{\textbf{Adversarial examples}}\\
\rowcolor{white}
\textbf{methods} & \textbf{Detected} & \textbf{Successful} & \textbf{Total}\\
\hline
GN & 2450 & 80 & 2530\\ 
FGSM & 2644 & 98 & 2742\\ 
RFGSM & 2849 & 120 & 2969\\ 
PGD & 2851 & 118 & 2969\\ 
EOTPGD & 2851 & 118 & 2969\\ 
FFGSM & 2813 & 110 & 2923\\ 
TPGD & 2813 & 129 & 2942\\ 
MIFGSM & 2850 & 112 & 2962\\ 
UPGD & 2850 & 112 & 2962\\ 
DIFGSM & 2840 & 108 & 2948\\ 
NIFGSM & 2856 & 109 & 2965\\ 
PGDRS & 2775 & 116 & 2891\\ 
VMIFGSM & 2742 & 109 & 2851\\ 
VNIFGSM & 2849 & 113 & 2962\\ 
CW & 2831 & 117 & 2948\\ 
PGDL2 & 9007 & 0 & 9007\\ 
PGDRSL2 & 2467 & 88 & 2555\\ 
DeepFool & 4210 & 211 & 4421\\ 
SparseFool & 6875 & 663 & 7538\\ 
OnePixel & 7220 & 2776 & 9996\\ 
Pixle & 8 & 317 & 325\\
\hline
& \textbf{Trusted} & \textbf{Wrongly} & \\ 
& & \textbf{rejected} & \\ 
\textbf{Test data} & 9106 & 894 & 10000\\ 
\textbf{Accuracy} & 0.96749 & - & 0.9689
\end{tabular}
\label{tab:results_8}
\end{minipage}\hfill%
\begin{minipage}[t]{0.48\linewidth}\centering
\caption{Experiment 9 - MNIST - Architecture 5. $t^\varepsilon = 0.25$, $\varepsilon=0.3$, and $\varepsilon' = 0.01$.}
\rowcolors{2}{gray!25}{white}
\begin{tabular}{c|c|c|c}
\rowcolor{white}
\textbf{Attack} & \multicolumn{3}{c}{\textbf{Adversarial examples}}\\
\rowcolor{white}
\textbf{methods} & \textbf{Detected} & \textbf{Successful} & \textbf{Total}\\
\hline
GN & 1005 & 1 & 1006\\ 
FGSM & 1292 & 1 & 1293\\ 
RFGSM & 1364 & 1 & 1365\\ 
PGD & 1365 & 1 & 1366\\ 
EOTPGD & 1363 & 1 & 1364\\ 
FFGSM & 1322 & 1 & 1323\\ 
TPGD & 1314 & 1 & 1315\\ 
MIFGSM & 1353 & 1 & 1354\\ 
UPGD & 1353 & 1 & 1354\\ 
DIFGSM & 1349 & 1 & 1350\\ 
NIFGSM & 1389 & 2 & 1391\\ 
PGDRS & 1361 & 1 & 1362\\ 
VMIFGSM & 1346 & 1 & 1347\\ 
VNIFGSM & 1353 & 1 & 1354\\ 
CW & 1363 & 1 & 1364\\ 
PGDL2 & 9007 & 0 & 9007\\ 
PGDRSL2 & 993 & 1 & 994\\ 
DeepFool & 2911 & 7 & 2918\\ 
SparseFool & 5369 & 356 & 5725\\ 
OnePixel & 9331 & 645 & 9976\\ 
Pixle & 10 & 440 & 450\\
\hline
& \textbf{Trusted} & \textbf{Wrongly} & \\ 
&  & \textbf{rejected} & \\ 
\textbf{Test data} & 9930 & 70 & 10000\\ 
\textbf{Accuracy} & 0.96536 & - & 0.9668
\end{tabular}
\label{tab:results_9}
\end{minipage}
\end{table}

\begin{table}\centering
\caption{Experiment 11 -MNIST - Architecture 6. $t^\varepsilon = 0.00002$, $\varepsilon=0.01172$, and $\varepsilon' = 0.01172$.}
\rowcolors{2}{gray!25}{white}
\begin{tabular}{c|c|c|c}
\rowcolor{white}
\textbf{Attack} & \multicolumn{3}{c}{\textbf{Adversarial examples}}\\
\rowcolor{white}
\textbf{methods} & \textbf{Detected} & \textbf{Successful} & \textbf{Total}\\
\hline
GN & 281 & 2612 & 2893\\ 
FGSM & 3005 & 292 & 3297\\ 
RFGSM & 3137 & 264 & 3401\\ 
PGD & 3137 & 265 & 3402\\ 
EOTPGD & 3137 & 264 & 3401\\ 
FFGSM & 3073 & 261 & 3334\\ 
TPGD & 3131 & 246 & 3377\\ 
MIFGSM & 3128 & 271 & 3399\\ 
UPGD & 3128 & 271 & 3399\\ 
DIFGSM & 3039 & 261 & 3300\\ 
NIFGSM & 3178 & 283 & 3461\\ 
PGDRS & 3124 & 271 & 3395\\ 
VMIFGSM & 3081 & 270 & 3351\\ 
VNIFGSM & 3124 & 273 & 3397\\ 
CW & 3126 & 271 & 3397\\ 
PGDL2 & 9007 & 0 & 9007\\ 
PGDRSL2 & 277 & 2574 & 2851\\ 
DeepFool & 5123 & 332 & 5455\\ 
Pixle & 247 & 428 & 675\\
\hline
& \textbf{Trusted} & \textbf{Wrongly} & \\ 
& & \textbf{rejected} & \\ 
\textbf{Test data} & 7712 & 2288 & 10000\\ 
\textbf{Accuracy} & 0.95236 & - & 0.9464
\end{tabular}
\label{tab:results_11}
\end{table}

\begin{table}
\begin{minipage}[t]{0.48\linewidth}\centering
\caption{Experiment 13 - MNIST - Architecture 7. $t^\varepsilon \approx 0.00046$, $\varepsilon \approx 0.27826$, and $\varepsilon' = 1.0$.}
\rowcolors{2}{gray!25}{white}
\begin{tabular}{c|c|c|c}
\rowcolor{white}
\textbf{Attack} & \multicolumn{3}{c}{\textbf{Adversarial examples}}\\
\rowcolor{white}
\textbf{methods} & \textbf{Detected} & \textbf{Successful} & \textbf{Total}\\
\hline
GN & 3813 & 0 & 3813\\ 
FGSM & 4102 & 0 & 4102\\ 
RFGSM & 4226 & 0 & 4226\\ 
PGD & 4226 & 0 & 4226\\ 
EOTPGD & 4226 & 0 & 4226\\ 
FFGSM & 4180 & 0 & 4180\\ 
TPGD & 4219 & 0 & 4219\\ 
MIFGSM & 4211 & 0 & 4211\\ 
UPGD & 4211 & 0 & 4211\\ 
DIFGSM & 0 & 4220 & 4220\\ 
NIFGSM & 4262 & 0 & 4262\\ 
PGDRS & 0 & 4204 & 4204\\ 
VMIFGSM & 3783 & 405 & 4188\\ 
VNIFGSM & 0 & 4211 & 4211\\ 
CW & 4102 & 108 & 4210\\ 
PGDL2 & 9007 & 0 & 9007\\ 
PGDRSL2 & 0 & 3805 & 3805\\ 
DeepFool & 5579 & 0 & 5579\\ 
Pixle & 393 & 0 & 393\\
\hline
& \textbf{Trusted} & \textbf{Wrongly} & \\ 
& & \textbf{rejected} & \\ 
\textbf{Test data} & 7480 & 2520 & 10000\\ 
\textbf{Accuracy} & 0.97099 & - & 0.9705
\end{tabular}
\label{tab:results_13}
\end{minipage}\hfill%
\begin{minipage}[t]{0.48\linewidth}\centering
\caption{Experiment 14 - FashionMNIST - Architecture 7. $t^\varepsilon \approx 0.00046$, $\varepsilon \approx 0.27826$, and $\varepsilon' = 1.0$.}
\rowcolors{2}{gray!25}{white}
\begin{tabular}{c|c|c|c}
\rowcolor{white}
\textbf{Attack} & \multicolumn{3}{c}{\textbf{Adversarial examples}}\\
\rowcolor{white}
\textbf{methods} & \textbf{Detected} & \textbf{Successful} & \textbf{Total}\\
\hline
GN & 5122 & 0 & 5122\\ 
FGSM & 6116 & 160 & 6276\\ 
RFGSM & 6246 & 336 & 6582\\ 
PGD & 4116 & 2470 & 6586\\ 
EOTPGD & 6415 & 168 & 6583\\ 
MIFGSM & 3180 & 3384 & 6564\\ 
UPGD & 6564 & 0 & 6564\\ 
DIFGSM & 0 & 6330 & 6330\\ 
NIFGSM & 6568 & 172 & 6740\\ 
PGDRS & 2296 & 4232 & 6528\\ 
VMIFGSM & 5482 & 984 & 6466\\ 
VNIFGSM & 6229 & 336 & 6565\\ 
CW & 6231 & 328 & 6559\\ 
PGDL2 & 9004 & 0 & 9004\\ 
PGDRSL2 & 0 & 5104 & 5104\\ 
DeepFool & 847 & 7082 & 7929\\ 
Pixle & 1759 & 0 & 1759\\
\hline
& \textbf{Trusted} & \textbf{Wrongly} & \\ 
& & \textbf{rejected} & \\ 
\textbf{Test data} & 6220 & 3780 & 10000\\ 
\textbf{Accuracy} & 0.85965 & - & 0.8629
\end{tabular}
\label{tab:results_14}
\end{minipage}
\end{table}

% \begin{table}
% \begin{minipage}[t]{0.48\linewidth}\centering
% \caption{MNIST --- Architecture 8.\\$t^\varepsilon \approx 0.00530$, $\varepsilon \approx 0.22846$, and $\varepsilon' \approx 0.02154$.}
% \rowcolors{2}{gray!25}{white}
% \begin{tabular}{c|c|c|c}
% \rowcolor{white}
% \textbf{Attack} & \multicolumn{3}{c}{\textbf{Adversarial examples}}\\
% \rowcolor{white}
% \textbf{methods} & \textbf{Detected} & \textbf{Successful} & \textbf{Total}\\
% \hline
% TPGD & 6334 & 0 & 6334\\ 
% MIFGSM & 6335 & 0 & 6335\\ 
% UPGD & 6335 & 0 & 6335\\ 
% DIFGSM & 6240 & 0 & 6240\\ 
% PGDRS & 6321 & 0 & 6321\\ 
% PGDRSL2 & 5837 & 0 & 5837\\
% \hline
% & \textbf{Trusted} & \textbf{Wrongly} & \\ 
% & & \textbf{rejected} & \\ 
% \textbf{Test data} & 10000 & 0 & 10000\\ 
% & \textbf{Accuracy} & & \\
% & 0.9619 & &
% \end{tabular}
% \end{minipage}\hfill%
% \begin{minipage}[t]{0.48\linewidth}\centering
% \caption{FashionMNIST --- Architecture 8.\\$t^\varepsilon = 1.25$, $\varepsilon = 0.8$, and $\varepsilon' \approx 0.27826$.}
% \rowcolors{2}{gray!25}{white}
% \begin{tabular}{c|c|c|c}
% \rowcolor{white}
% \textbf{Attack} & \multicolumn{3}{c}{\textbf{Adversarial examples}}\\
% \rowcolor{white}
% \textbf{methods} & \textbf{Detected} & \textbf{Successful} & \textbf{Total}\\
% \hline
% GN & 6028 & 0 & 6028\\ 
% PGD & 7571 & 0 & 7571\\ 
% TPGD & 7306 & 0 & 7306\\ 
% PGDRSL2 & 6088 & 0 & 6088\\
% \hline
% & \textbf{Trusted} & \textbf{Wrongly} & \\ 
% & & \textbf{rejected} & \\ 
% \textbf{Test data} & 9239 & 761 & 10000\\ 
% & \textbf{Accuracy} & & \\
% & 0.88267 & &
% \end{tabular}
% \end{minipage}
% \end{table}

\section*{Appendix B: Proofs} \label{app:proofs}

In this appendix, we suppose that $(W,f)$ and $(V,g)$ are neural networks with the same underlying quiver $Q$ and that data samples are in $\mathbb{R}^{d}$. We denote by $a(W,f)_q(x)$ the activation of neuron $q$ in the network $(W,f)$ when feed forwarding the input $x$, and by $p(W,f)_q(x)$ the corresponding pre-activation.

\begin{namedthm}{Corollary of Theorem 4.13 by \cite{armenta1}}
    Let $\tau : (W,f) \to (V,g)$ be an isomorphism of neural networks and let $q\in Q_0$, then
    \begin{itemize}
        \item $a(V,g)_q (x) = \tau_q a(W,f)_q (x) $,
        \item $p(V,g)_q (x) = \tau_q p(W,f)_q (x) $.
    \end{itemize}
\end{namedthm}

\begin{namedthm}{Theorem \ref{thm:inducedQuivReps}}
If $(W,f)\cong (V,g)$, then for every $x \in \mathbb{R}^{d}$ the quiver representations $\varphi(W,f)(x)$ and $\varphi(V,g)(x)$ are isomorphic.
\end{namedthm}
\begin{proof}
Let $\tau:(W,f) \to (V,g)$ be an isomorphism of neural networks. We have that 
\[
\begin{array}{lcl}
    \Big( \varphi(V,g)(x) \Big)_\alpha & = & V_\alpha \dfrac{a(V,g)_{s(\alpha)} (x)}{p(V,g)_{s(\alpha)} (x)}  \\
     & = &  V_\alpha \dfrac{\tau_{s(\alpha)} a(W,f)_{s(\alpha)} (x)}{\tau_{s(\alpha)} p(W,f)_{s(\alpha)} (x)} \\
     & = &  V_\alpha \dfrac{a(W,f)_{s(\alpha)} (x)}{ p(W,f)_{s(\alpha)} (x)} \\
     & = &  \dfrac{V_\alpha}{W_\alpha} \Big( \varphi(W,f)(x) \Big)_\alpha. \\
\end{array}
\]
Since $\tau_{t(\alpha)} W_\alpha = V_\alpha \tau_{s(\alpha)}$, we get that
\[
\begin{array}{lcl}
    \Big( \varphi(V,g)(x) \Big)_\alpha \tau_{s(\alpha)} & = & \tau_{s(\alpha)} \dfrac{V_\alpha}{W_\alpha} \Big( \varphi(W,f)(x) \Big)_\alpha  \\
     & = &   \tau_{t(\alpha)} \dfrac{W_\alpha}{W_\alpha} \Big( \varphi(W,f)(x) \Big)_\alpha \\
     & = &  \tau_{t(\alpha)} \Big( \varphi(W,f)(x) \Big)_\alpha. \\
\end{array}
\]
This means that $\tau: \varphi(W,f)(x) \to \varphi(V,g)(x)$ defines the required isomorphism. 
\end{proof}

\begin{namedthm}{Theorem \ref{thm:sameMatrices}} 
If $(W,f)\cong (V,g)$, then $\M(W,f)(x) = \M(V,g)(x)$ for all $x \in \mathbb{R}^{d}$.
\end{namedthm}
\begin{proof}
If the quiver representations $\varphi(W,f)(x)$ and $\varphi(V,g)(x)$ are isomorphic then, by definition, they define the same point in the moduli space of the neural network, and by Lemma 7.4 of \cite{armenta2} we conclude that $\M(W,f)(x)=\M(V,g)(x)$ for all $x$.
\end{proof}

\begin{namedthm}{Theorem \ref{thm:M_0Measure}}
The set $\mathcal{M}_0$ is of measure zero.
\end{namedthm}

\begin{proof}
The set $\mathcal{M}_0$ is composed of all matrices $M = (m_{i_{1},i_{2}}) \in \Mat_{\mathbb{R}}(k,d) \cong \mathbb{R}^{dk}$ such that at least two rows of $M$ sum to the same number. 
Therefore, $\mathcal{M}_0$ is the union of $\left(\begin{smallmatrix}
dk\\2\end{smallmatrix}\right)$ linear subspaces of dimension $dk-1$, given by an equation of the form $\sum_{p=1}^{d}m_{i_1,p}-\sum_{p=1}^{d}m_{i_1',p} = 0$ for $i_1 \neq i_1'$. 
It is a well-known fact that an hyperplane is of (Lebesgue) measure zero and that a finite union of sets of measure zero is of measure zero. The result follows.
\end{proof}

\begin{namedthm}{Theorem \ref{thm:convex}} 
    $\mathcal{M}_j$ is a convex subset of $\Mat_{\mathbb{R}}(k, d)$.
\end{namedthm}

\begin{proof}
Let \( A \in \Mat_{\mathbb{R}}(k, d)\) be a matrix, and let \( \mathbf{v}(A) \in \mathbb{R}^k \) be the vector obtained by summing the elements of each row of \( A \), that is, $\mathbf{v}(A) = A 1_d$. Specifically, the \( i \)-th coordinate of \( \mathbf{v}(A) \) is given by:
\[
    \mathbf{v}(A)_i = \sum_{p=1}^{d} A_{i,p}.
\]
To show that \( \mathcal{M}_j \) is convex, we need to prove that for any two matrices \( A, B \in \mathcal{M}_j \) and any \( \lambda \in [0, 1] \), the matrix \( C = \lambda A + (1 - \lambda) B \) also belongs to \( \mathcal{M}_j \). Given \( A, B \in \mathcal{M}_j \), we denote the coordinates of $\mathbf{v}(A)$ and $\mathbf{v}(B)$, respectively, by:
\[
    \mathbf{v}(A) = [v_1(A), v_2(A), \ldots, v_k(A)]
\]
\[
    \mathbf{v}(B) = [v_1(B), v_2(B), \ldots, v_k(B)]
\]
By the definition of \( \mathcal{M}_j \), we know that:
\[
    v_j(A) = \max(v_1(A), v_2(A), \ldots, v_k(A))
\]
\[
    v_j(B) = \max(v_1(B), v_2(B), \ldots, v_k(B))
\]
Consider the matrix \( C = \lambda A + (1 - \lambda) B \). The sum of the \( i \)-th row of \( C \) is:
\[
    v_i(C) = \sum_{p=1}^{d} C_{i,p} = \sum_{p=1}^{d} (\lambda A_{i,p} + (1 - \lambda) B_{i,p}) = \lambda \sum_{p=1}^{d} A_{i,p} + (1 - \lambda) \sum_{p=1}^{d} B_{i,p}
\]
Therefore:
\[
    v_i(C) = \lambda v_i(A) + (1 - \lambda) v_i(B)
\]
We need to show that the maximum coordinate of \( \mathbf{v}(C) \) is also at index \( j \). Since \( v_j(A) \) and \( v_j(B) \) are the maximum coordinates of \( \mathbf{v}(A) \) and \( \mathbf{v}(B) \) respectively, we have:
\[
    v_j(A) \geq v_i(A) \text{ for all } i, \text{ and}
\]
\[
    v_j(B) \geq v_i(B) \text{ for all } i.
\]
Thus,
\[
    \lambda v_j(A) + (1 - \lambda) v_j(B) \geq \lambda v_i(A) + (1 - \lambda) v_i(B) \text{ for all } i
\]
This implies that \( v_j(C) \geq v_i(C) \) for all \( i \), meaning that the maximum coordinate of \( \mathbf{v}(C) \) is at index \( j \). Hence, the subset \( \mathcal{M}_j \) is convex.
\end{proof}

\section*{Appendix C: Training regime} \label{app:performance}

This appendix provides the specific hyperparameters used to train our networks and the performance they achieve. The hyperparameters are given in Table \ref{tab:hyperparams}. The training curves in Table \ref{tab:performance09} and Figures \ref{fig:performance1234}, \ref{fig:performance5678}, \ref{fig:performance10111213} and \ref{fig:performance141516}.

\begin{table}[ht]
\centering
\caption{These experiments were trained for one epoch only so we do not show training curves.}
\label{tab:performance09}
\begin{tabular}{ccc}
\hline
Experiment index & Train accuracy / Test accuracy & Train loss / Test loss \\
\hline
0 & 0.9163 / 0.9159 & 2.1185 / 2.1067\\
9 & 0.9742 / 0.9668 & 2.755 / 3.2392 \\
\hline
\end{tabular}
\end{table}

\begin{table}[ht]
\centering
\caption{Training hyperparameters of our experiments. Here, Exp stands for the experiment index; Arch for the architecture index in table \ref{tab:architectures}; LR for learning rate; BS for batch size; Ep for epochs; LR Sch is an integer representing every how many number of epochs the learning rate is reduced by multiplying it by 0.1; WD for weight decay; Dpt for dropout probabilities}
\label{tab:hyperparams}
\begin{tabularx}{\textwidth}{cccccccccccc}
\hline
Experiment & Architecture & Optimizer & Dataset & LR & BS & Epochs & LR Sch & WD & Dpt \\
\hline
0 & 1 & SGD & MNIST & 0.01 & 8 & 1 & - & - & - \\
1 & 1 & Momentum & MNIST & 0.01 & 32 & 11 & 5 & - & - \\
2 & 1 & Adam & Fashion & 1e-06 & 32 & 81 & 20 & - & - \\
3 & 1 & SGD & Fashion & 0.1 & 16 & 51 & 20 & - & - \\
4 & 2 & Momentum & MNIST & 0.01 & 32 & 7 & 5 & - & - \\
5 & 2 & Momentum & Fashion & 0.01 & 32 & 11 & 5 & - & - \\
6 & 2 & Adam & MNIST & 0.001 & 128 & 6 & 3 & - & - \\
7 & 3 & Adam & Fashion & 0.0001 & 16 & 11 & 5 &  - & - \\
8 & 4 & Adam & MNIST & 0.001 & 128 & 6 & - &  - & - \\
9 & 5 & Momentum & MNIST & 0.01 & 32 & 1 & -  & - & - \\
10 & 5 & Momentum & Fashion & 0.01 & 32 & 11 & 5 & - & - \\
11 & 6 & SGD & MNIST & 0.01 & 64 & 6 & - & - & - \\
12 & 6 & SGD & Fashion & 0.01 & 64 & 16 & - & - & - \\
13 & 7 & Momentum & MNIST & 0.001 & 128 & 11 & - & 1e-05 & 0.5 \\
14 & 7 & Momentum & Fashion & 0.001 & 128 & 11 & - & 1e-05 & 0.5 \\
15 & 8 & Momentum & MNIST & 0.001 & 256 & 11 & - & - & - \\
16 & 8 & Momentum & Fashion & 0.001 & 256 & 11 & 5 & - & - \\
\hline
\end{tabularx}
\end{table}

\begin{figure}[ht]
    \centering
    \includegraphics[width=0.45\textwidth]{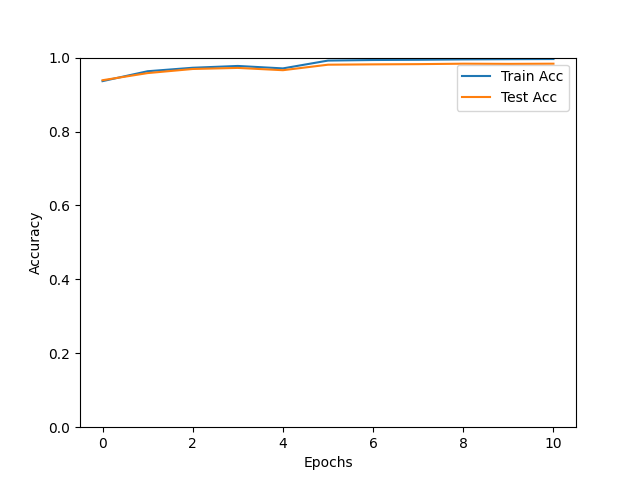}
    \includegraphics[width=0.45\textwidth]{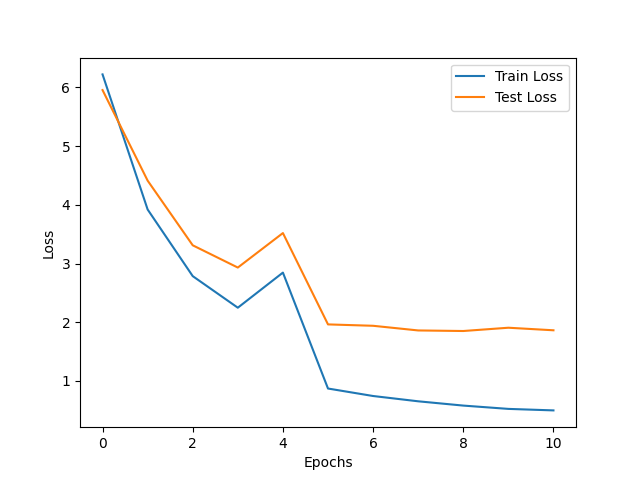}
    \includegraphics[width=0.45\textwidth]{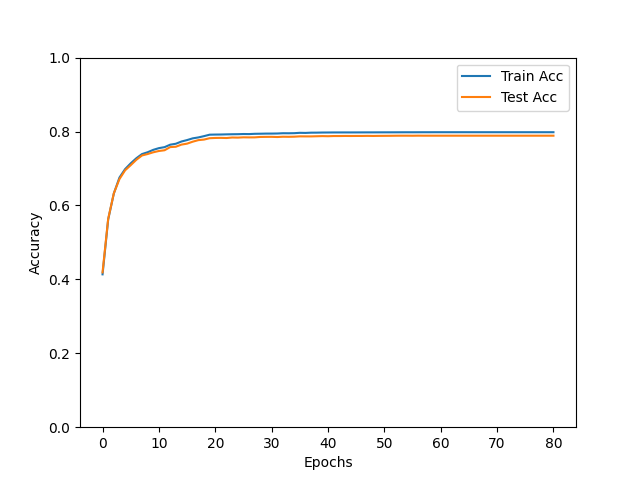}
    \includegraphics[width=0.45\textwidth]{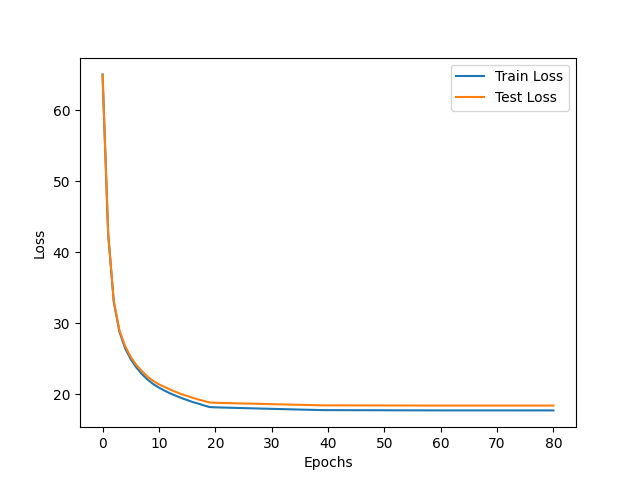}
    \includegraphics[width=0.45\textwidth]{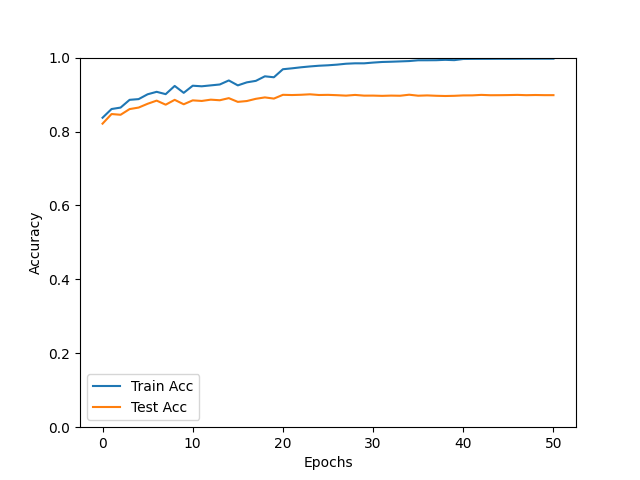}
    \includegraphics[width=0.45\textwidth]{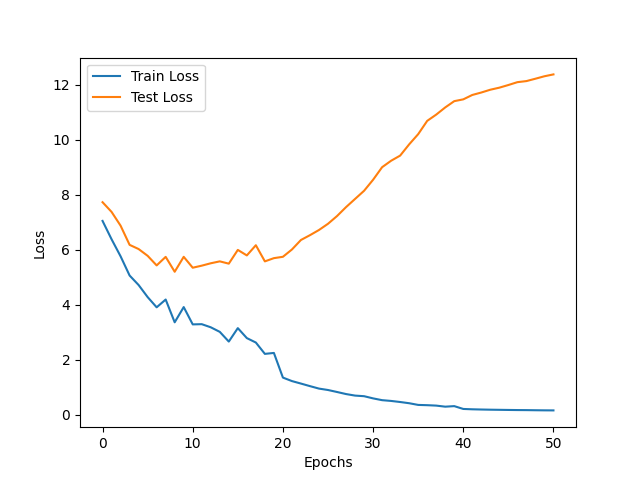}
    \includegraphics[width=0.45\textwidth]{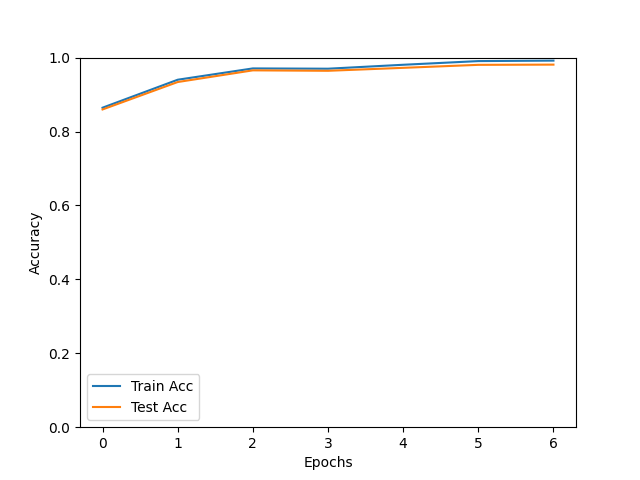}
    \includegraphics[width=0.45\textwidth]{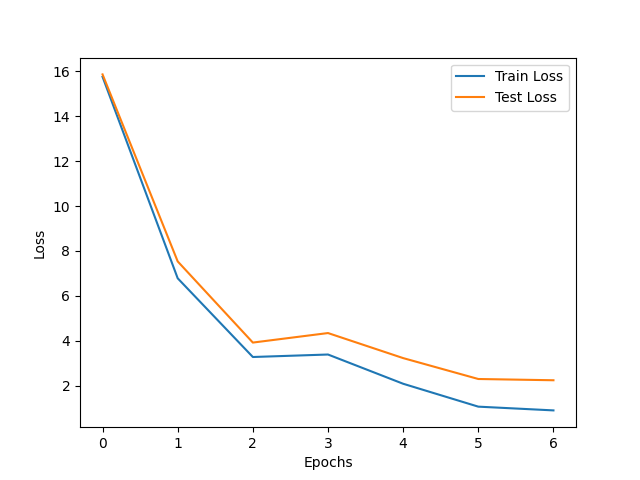}
    \caption{Training curves of experiments 1, 2, 3 and 4 from top to bottom.}
    \label{fig:performance1234}
\end{figure}

\begin{figure}[ht]
    \centering
    \includegraphics[width=0.45\textwidth]{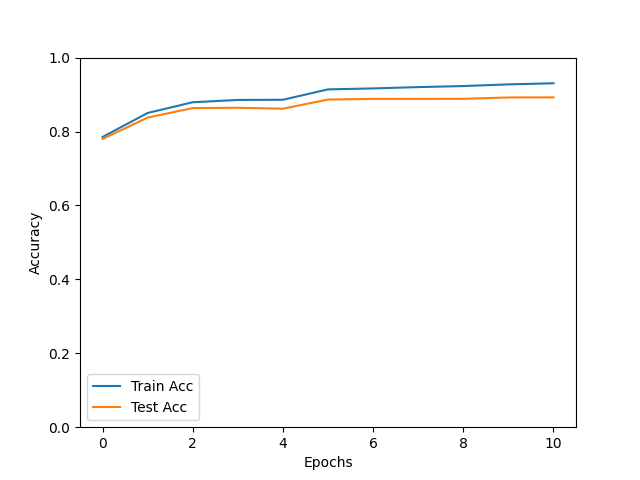}
    \includegraphics[width=0.45\textwidth]{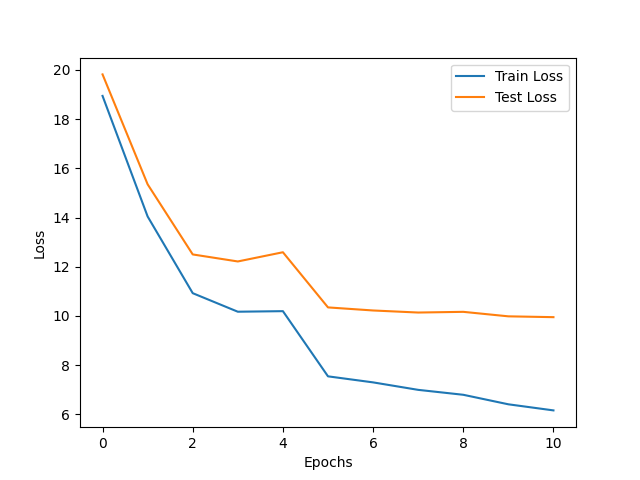}
    \includegraphics[width=0.45\textwidth]{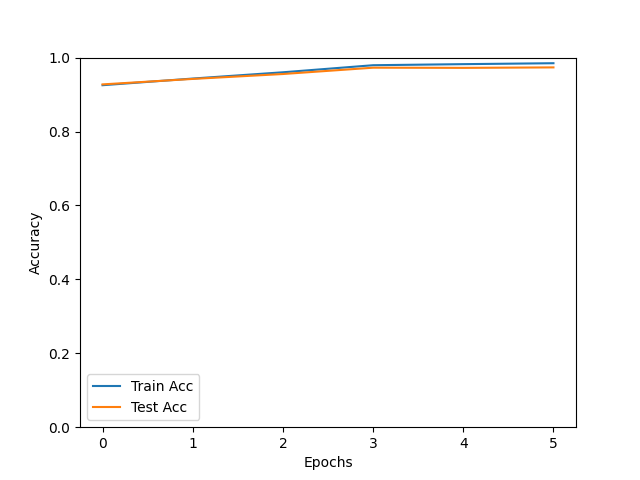}
    \includegraphics[width=0.45\textwidth]{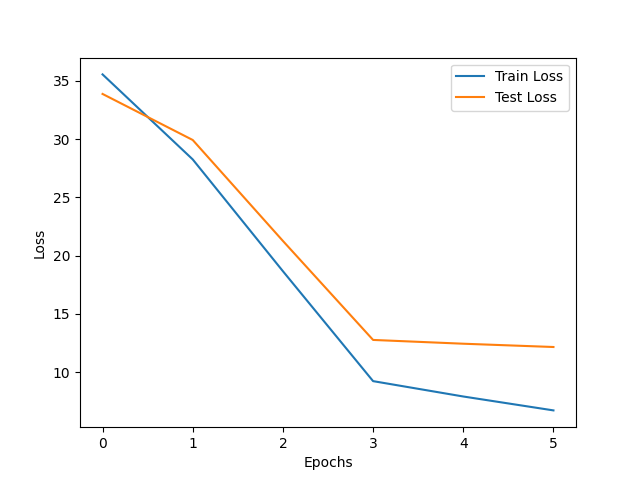}
    \includegraphics[width=0.45\textwidth]{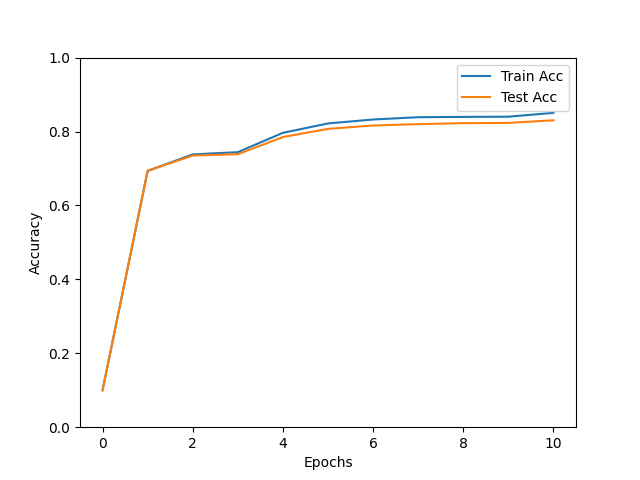}
    \includegraphics[width=0.45\textwidth]{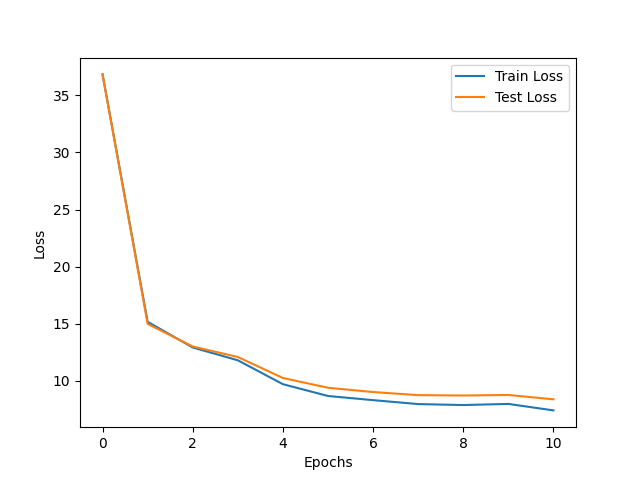}
    \includegraphics[width=0.45\textwidth]{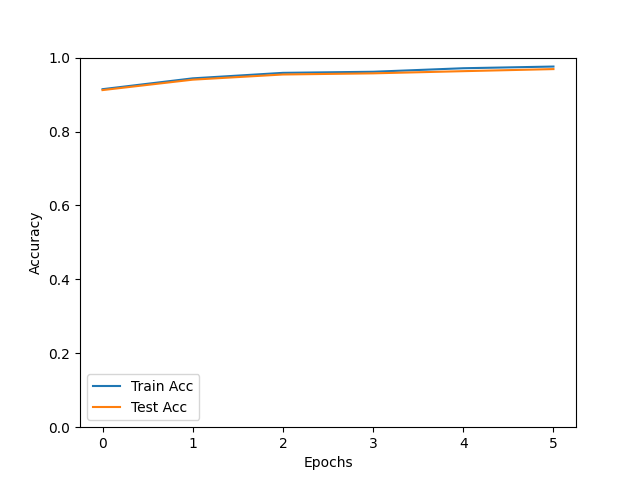}
    \includegraphics[width=0.45\textwidth]{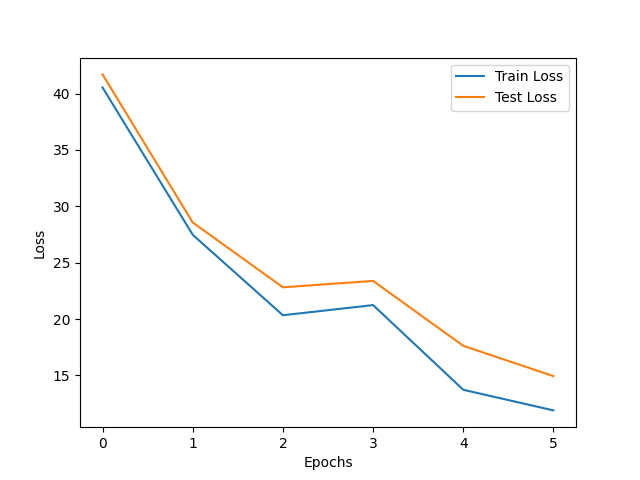}
    \caption{Training curves of experiments 5, 6, 7 and 8 from top to bottom.}
    \label{fig:performance5678}
\end{figure}

\begin{figure}[ht]
    \centering
    \includegraphics[width=0.45\textwidth]{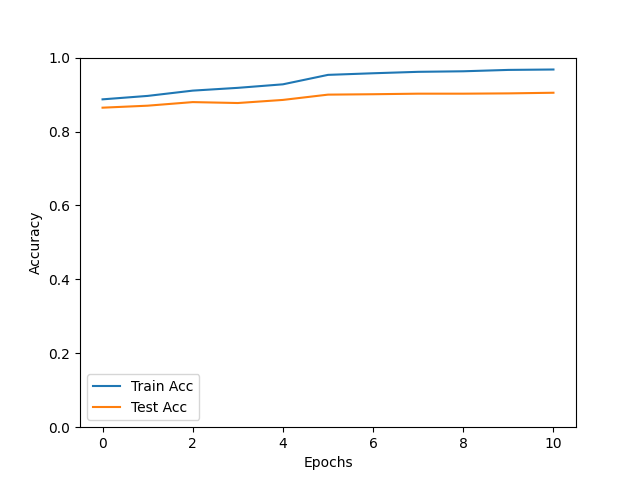}
    \includegraphics[width=0.45\textwidth]{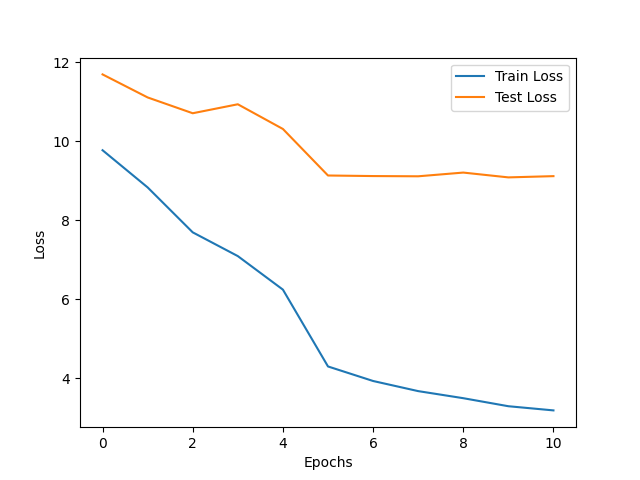}
    \includegraphics[width=0.45\textwidth]{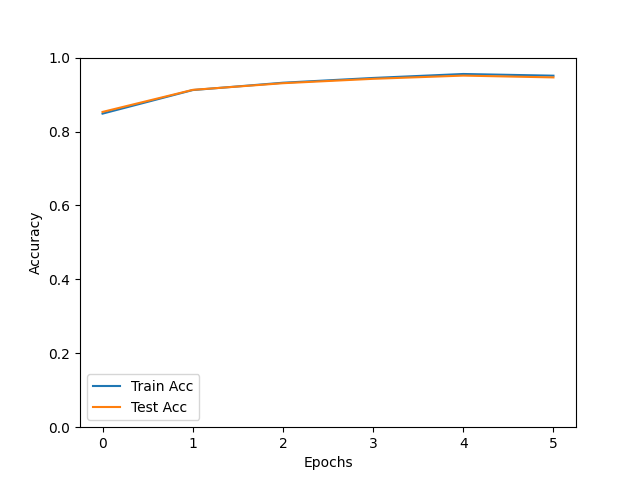}
    \includegraphics[width=0.45\textwidth]{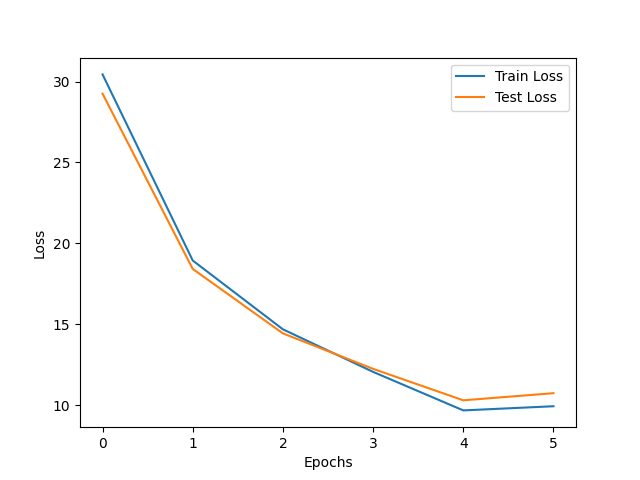}
    \includegraphics[width=0.45\textwidth]{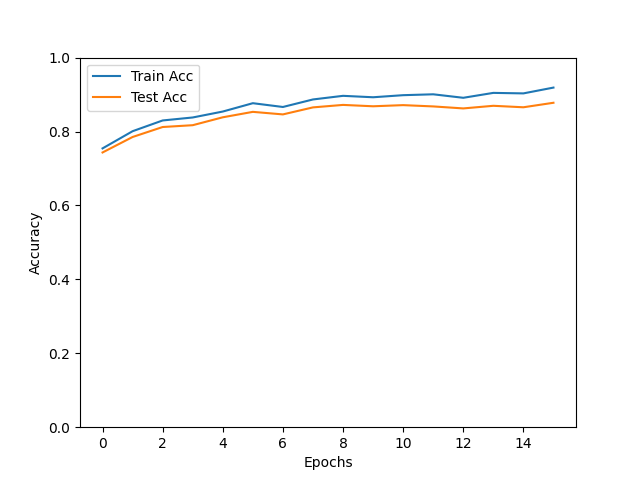}
    \includegraphics[width=0.45\textwidth]{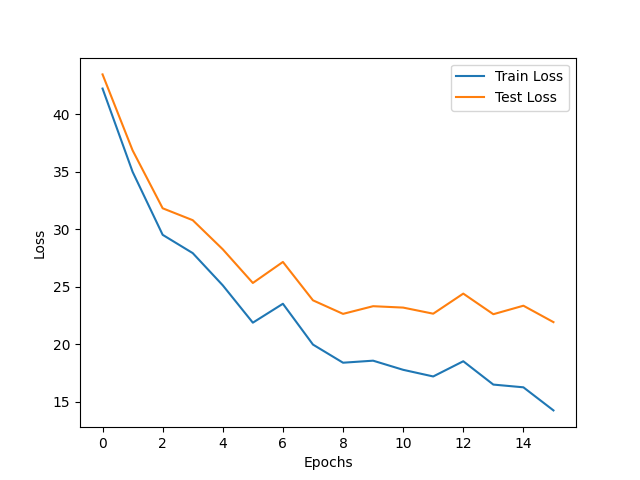}
    \includegraphics[width=0.45\textwidth]{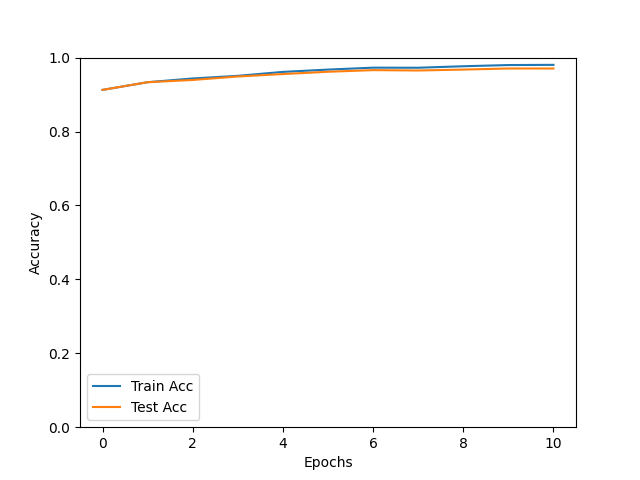}
    \includegraphics[width=0.45\textwidth]{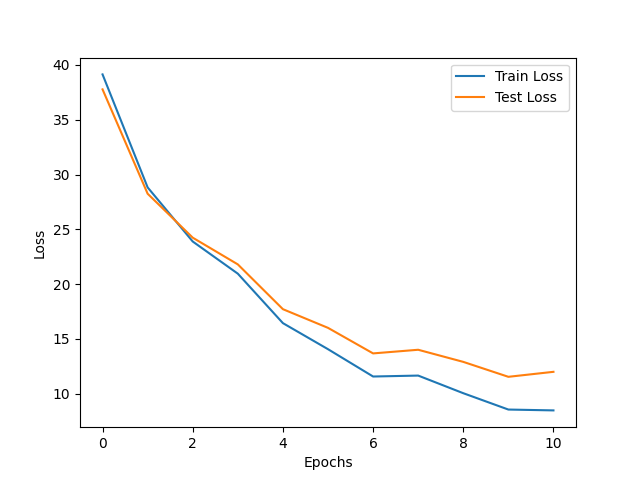}
    \caption{Training curves of experiments 10, 11, 12 and 13 from top to bottom.}
    \label{fig:performance10111213}
\end{figure}

\begin{figure}[ht]
    \centering
    \includegraphics[width=0.45\textwidth]{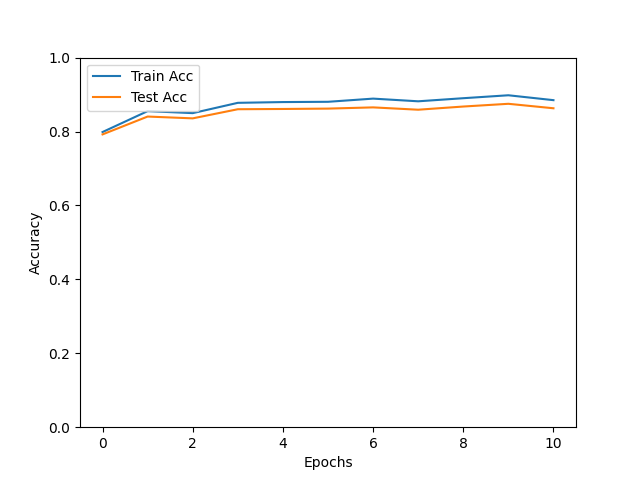}
    \includegraphics[width=0.45\textwidth]{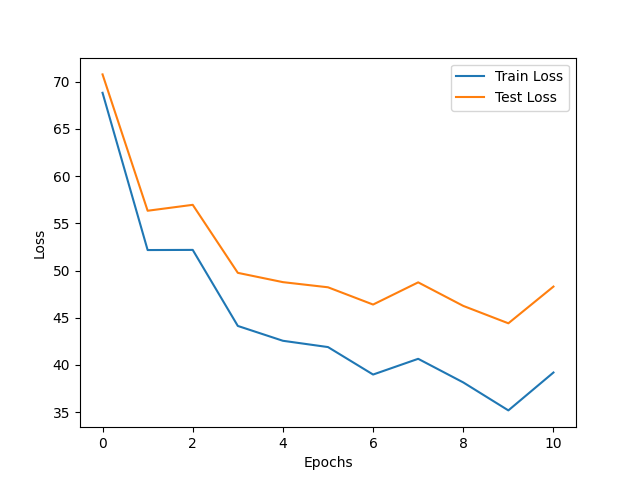}
    \includegraphics[width=0.45\textwidth]{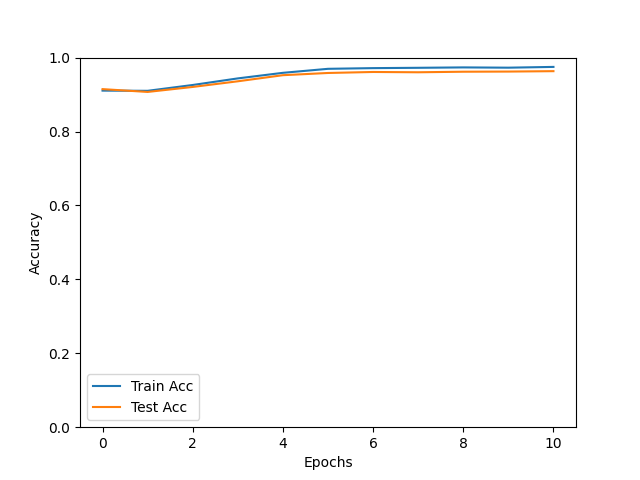}
    \includegraphics[width=0.45\textwidth]{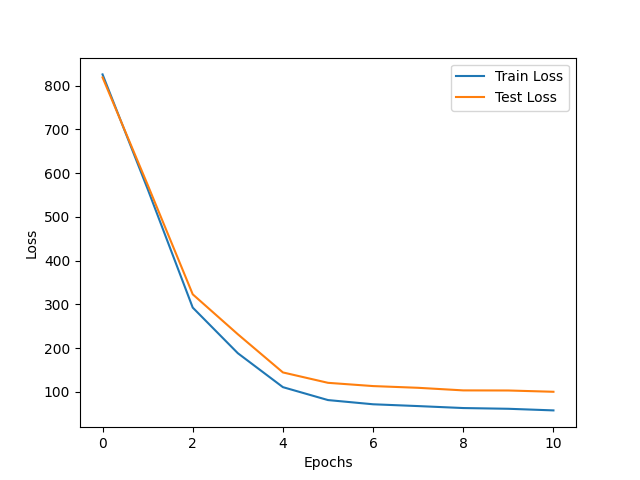}
    \includegraphics[width=0.45\textwidth]{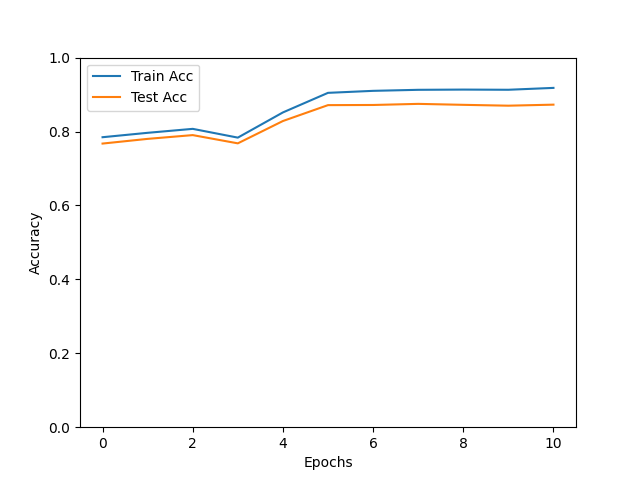}
    \includegraphics[width=0.45\textwidth]{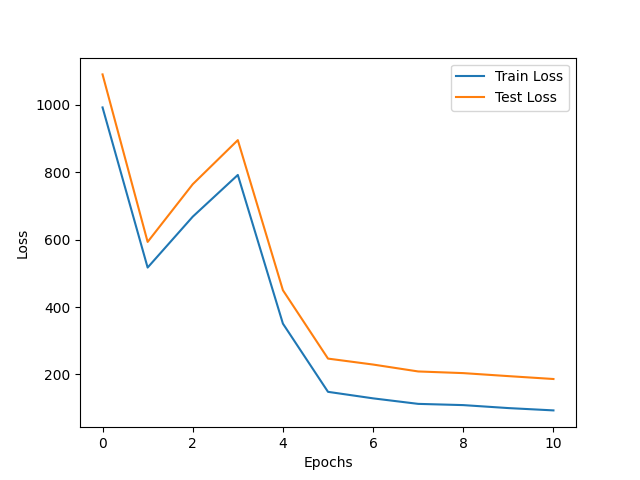}
    \caption{Training curves of experiments 14, 15 and 16 from top to bottom.}
    \label{fig:performance141516}
\end{figure}

\end{document}